\documentclass{article}
\pdftrailerid{}
\PassOptionsToPackage{table}{xcolor}
\usepackage{iclr2027_conference,times}

\iclrfinalcopy

\usepackage{amsmath,amssymb,amsthm}
\usepackage{booktabs}
\usepackage{multirow}
\usepackage{array}
\usepackage{graphicx}
\usepackage{wrapfig}
\usepackage{placeins}
\usepackage{hyperref}
\usepackage{natbib}
\usepackage{colortbl}
\usepackage{relsize}
\usepackage{xspace}

\definecolor{oursrow}{RGB}{226,235,249}   
\definecolor{grouprow}{RGB}{234,237,241}  
\newcommand{\method}{CAST\xspace}
\newcommand{\tabsmall}{\relsize{-1}}
\newcommand{\secondarycolor}[1]{\textcolor{gray}{#1}}
\definecolor{baselinerow}{RGB}{246,247,248}
\definecolor{posdelta}{RGB}{28,103,80}
\definecolor{negdelta}{RGB}{150,64,35}
\newcommand{\ours}{\rowcolor{oursrow}}
\definecolor{groupband}{rgb}{1.0,0.95,0.9}
\newcommand{\grouphead}[2]{\rowcolor{groupband}\multicolumn{#1}{c}{\textit{#2}\strut}}
\definecolor{ourblue}{rgb}{0.368,0.507,0.71}
\hypersetup{colorlinks,linkcolor=ourblue,citecolor=ourblue,urlcolor=ourblue}
\newtheorem{theorem}{Theorem}
\newtheorem{proposition}{Proposition}
\newtheorem{lemma}{Lemma}
\newtheorem{corollary}{Corollary}
\newtheorem{assumption}{Assumption}

\newcommand{\Eround}{\mathbb{E}[R]}

\title{\raisebox{-0.62em}{\includegraphics[height=2.4em]{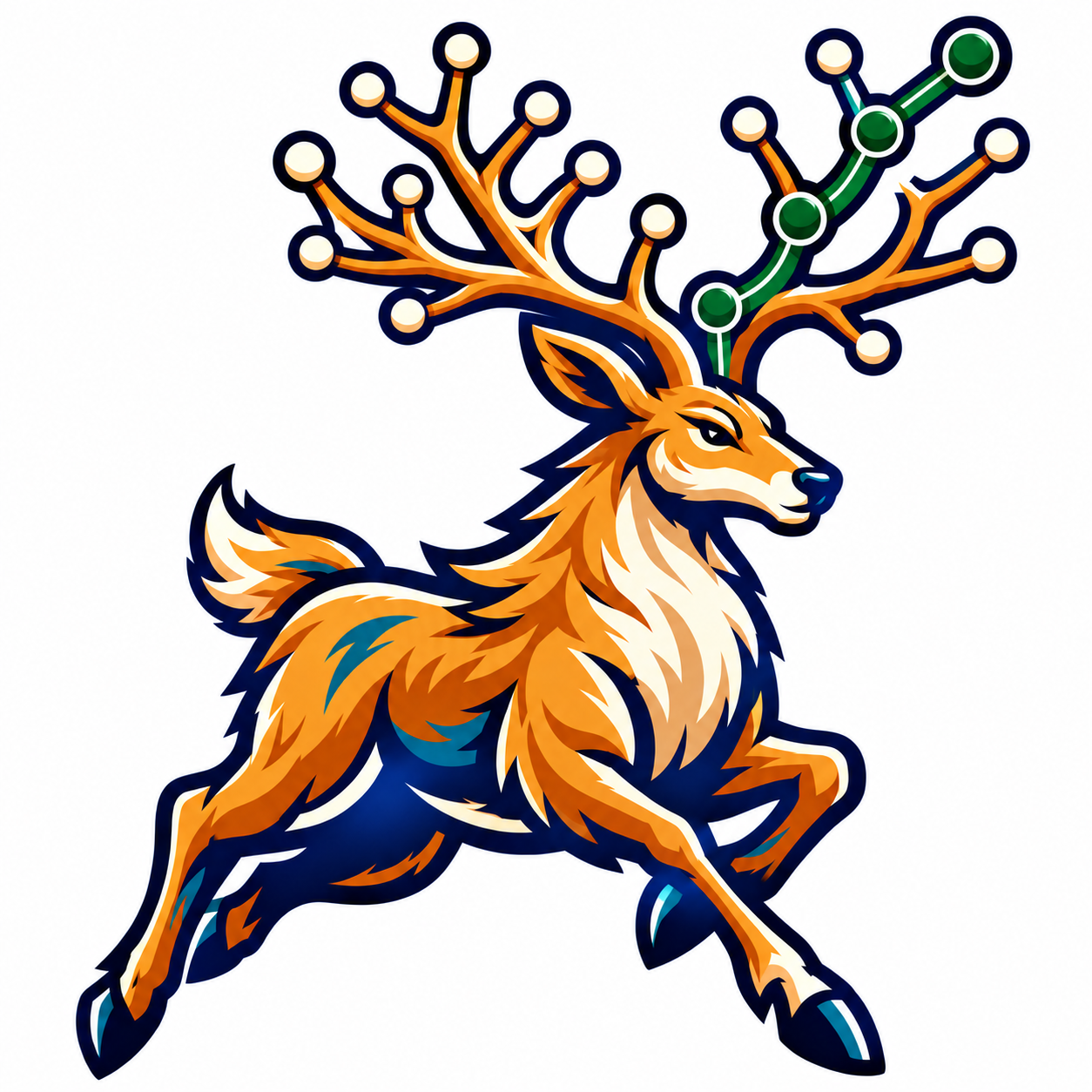}}\hspace{0.28em}CAST: Cost-Aware Speculative Trees from One-Pass Block Drafters}
\author{
\parbox[t]{2.45in}{\centering\normalfont
\textbf{Jungseob Lee} \\
Korea University \\
\texttt{omanma1928@korea.ac.kr}}
\And
\parbox[t]{2.45in}{\centering\normalfont
\textbf{Sugyeong Eo\textsuperscript{\textdagger}} \\
Yonsei University Mirae Campus \\
\texttt{s.eo@yonsei.ac.kr}}
}
\date{}

\begin{document}
\maketitle
\lhead{Preprint}
\begingroup
\renewcommand{\thefootnote}{\textdagger}
\footnotetext[1]{Corresponding author.}
\endgroup

\begin{abstract}
Speculative decoding accelerates large language model inference by drafting future tokens cheaply and verifying them with the target model in parallel. Block drafters score a whole block of future tokens in one forward pass, yet standard decoding verifies only the top-scoring chain and discards the other candidates. Because these candidates are already scored, verifying more of them adds target computation but no extra drafting. We introduce \method (Cost-Aware Speculative Trees), which packs these candidates into a tree and verifies it in a single target pass, leaving the target model, drafter weights, and decoding rule untouched. To decide how wide the tree should be, \method adds candidates while the expected gain from the next one outweighs the verification time it adds. The width therefore adapts to each deployment from a latency measurement, without sweeping over widths. We evaluate \method across five domains on three GPU generations and two model families. At its predicted width, \method is faster than the standard chain in all eight settings, by up to $43\%$. We also find that the best width depends strongly on the deployment. Where verification cost jumps at a kernel boundary, a 128-token tree is only $2\%$ faster than the standard chain, whereas the tree at the predicted width is $20\%$ faster. Furthermore, we prove that \method leaves the target output distribution unchanged under both greedy and sampled decoding. Code is available at \url{https://github.com/js-lee-AI/CAST}.
\end{abstract}

\section{Introduction}
\label{sec:intro}

Large language models (LLMs) generate text one token at a time, with a full forward pass of the model for each new token \citep{brown2020gpt3,grattafiori2024llama3,yang2025qwen3}. Speculative decoding (SD) reduces this sequential cost without changing the output distribution \citep{leviathan2023fast,chen2023accelerating,arxiv-2401-07851,arxiv-2502-19732}. In each round, a lightweight drafter proposes several future tokens that the target model then verifies in a single forward pass. Block drafters such as DFlash \citep{chen2026dflash} make this drafting step particularly cheap. DFlash conditions a small block-diffusion head \citep{arxiv-2502-09992,arxiv-2503-09573} on the hidden features of the target model and, in one forward pass, returns a probability distribution over the vocabulary for every position of a future block.

In standard decoding, however, most of this output is discarded. The decoder keeps the most likely token at each position, verifies the resulting chain, and drops every other candidate that the drafter has already scored \citep{stern2018blockwise,arxiv-2408-05636,arxiv-2504-18583,zhang2026dflare}. As Figure~\ref{fig:framework} shows, if the target rejects an early token of the chain, the round commits only a few tokens, even when the token the target wanted was among the dropped candidates. A candidate tree can retain such alternatives, and tree verification checks the entire tree in one target pass by letting each token attend only to its own ancestors \citep{miao2024specinfer,cai2024medusa}.

To build such a tree, a decoder must first choose its width. How many candidates should the tree hold? Existing answers target autoregressive drafters such as the EAGLE family \citep{li2024eagle,li2024eagle2,li2025eagle3}, which produce candidates through dependent draft steps. Adaptive and hardware-aware tree construction accounts for this drafting cost \citep{li2024eagle2,wang2025opt,chen2024sequoia}. These choices do not transfer directly to a block drafter, because one pass has already scored every candidate and a wider tree adds verification work but no drafting work.

The price of this extra verification, in turn, depends on where the model runs. It differs across GPUs, grows with the number of concurrent requests, and on some GPUs jumps once the tree crosses a kernel boundary. The best width therefore changes across deployments. Finding it by sweeping widths on every deployment is costly. A concurrent study also weighs the benefit of tree nodes against their cost \citep{wang2026smart}, while another line retrains or extends block drafters to capture dependencies between positions \citep{arxiv-2606-18394,arxiv-2607-06763}. We instead keep the drafter fixed and ask how its tree width should follow the deployment.

\begin{figure}[t]\centering
\includegraphics[width=\linewidth]{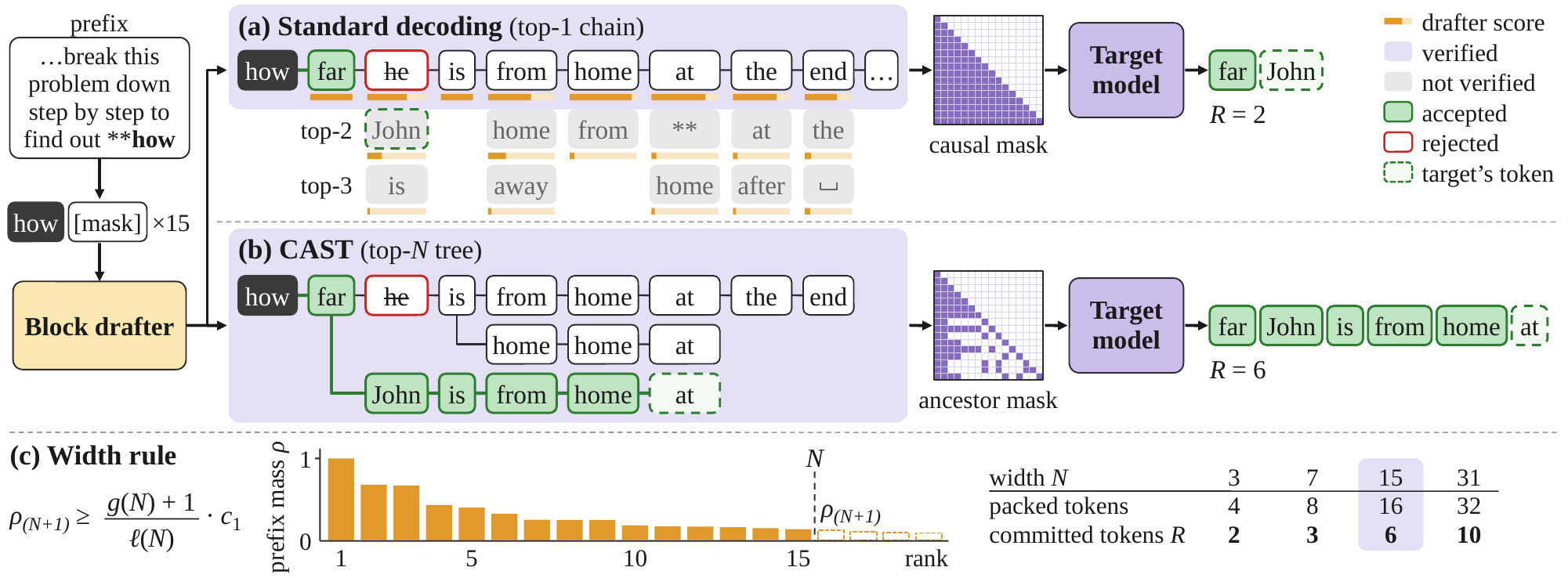}
\caption{One decoding round on a GSM8K prompt. With the same drafter scores and 16 packed tokens, (a) standard DFlash decoding commits two tokens, whereas (b) \method verifies a tree and commits six. (c) Theorem~\ref{thm:width} chooses the width $N$.}
\label{fig:framework}
\end{figure}

We introduce \method (Cost-Aware Speculative Trees), a decoding policy that turns the discarded candidates of a block drafter into a verified tree. In each round, \method runs the unchanged drafter once, scores candidate sequences by the product of the drafter's probabilities along each sequence, and keeps the highest-scoring sequences as a tree for the target to verify in one pass. To choose the width, \method applies a cost-aware stopping rule. The rule adds the next candidate as long as its chance of acceptance exceeds the number of tokens the decoder would produce, at its current speed, during the verification time that the candidate adds. Because this test needs only the drafter's scores and a short latency measurement, the width adapts to each deployment without a sweep. Under serving load, the same scores divide one shared verification budget among concurrent requests, a problem studied for autoregressive drafters \citep{arxiv-2604-09603}. We evaluate \method with DFlash as the block drafter.

We summarize our contributions and findings as follows.
\begin{itemize}
\item In \S\ref{sec:method}, we show that keeping the highest-scoring candidates maximizes the expected number of accepted tokens under the drafter's scores for any width. We then derive the cost-aware stopping rule, extend it to a budget shared by requests, and prove that \method preserves the target output distribution under greedy and sampled decoding.
\item Experiments in \S\ref{sec:exp} show that \method is faster than the standard DFlash chain in all eight hardware and model settings, with average gains of 20--36\% and up to 43\% on a single domain. Its predicted width matches the best swept width in six settings. In offline replay, the tree also lengthens accepted rounds for a second one-pass drafter \citep{arxiv-2601-19278}.
\item We observe that the best width depends strongly on the deployment. Where verification cost jumps at a kernel boundary, a 128-token tree is only $2.4\%$ faster than the standard chain, whereas the tree at the predicted width is $19.8\%$ faster. Under serving load, the best static width shrinks from a wide tree to the chain as concurrency grows, and best-first allocation of a shared budget lifts goodput over the best static width by up to $13.7\%$.
\end{itemize}

\section{Background and Setup}
\label{sec:setup}

\paragraph{Speculative decoding.}
Let $p$ denote the target model over vocabulary $\Sigma$, and let $\circ$ denote concatenation. Speculative decoding runs in rounds, and each round commits the accepted draft tokens together with one token selected by the target. If $A$ draft tokens are accepted, the round commits $R=A+1$ tokens, matching the acceptance length $\tau$ reported by DFlash. Decoding throughput is the expected committed length $\Eround$ divided by the round time \citep{leviathan2023fast,arxiv-2408-11049}. A decoder therefore gets faster by committing more tokens in each round or by making rounds cheaper.

\paragraph{Round state.}
A round starts from two pieces of state. The \emph{cache-backed prefix} $x$ holds the tokens whose target key-value (KV) cache and DFlash feature cache are already computed. The \emph{pending root} $b$ is the token committed by the previous round but not yet processed by the target. It serves as the first input of the next draft block and as the root of the next candidate tree.

\paragraph{One-pass block drafters.}
We use only the public DFlash interface \citep{chen2026dflash}. Conditioned on the cached features for $x$, the DFlash head consumes a block $[b,\mathrm{mask},\ldots,\mathrm{mask}]$ of $L{+}1{=}16$ tokens and, in one forward pass, returns a distribution $q_j(\cdot\mid x,b)$ for each future position $j\le L$. The drafter is not called again within the block. Standard DFlash decoding keeps the most likely token at each position and verifies the resulting chain. \method instead uses these distributions as candidate scores and ranks a candidate sequence $y_{1:d}$ of length $d\le L$ by the \emph{plug-in} score
\begin{equation}
\hat\pi(y_{1:d}\mid x,b)=\prod_{j\le d}q_j(y_j\mid x,b).
\label{eq:plugin}
\end{equation}
This product serves only to rank candidates. It does not assume that the target distribution factorizes over positions, and \S\ref{sec:exp} measures how well it ranks them.

\paragraph{Candidate trees and verification.}
A depth-$d$ node is a candidate sequence $y_{1:d}$ with $1\le d\le L$. A candidate tree is \emph{prefix-closed}, meaning that it contains every ancestor of each of its nodes, and its width or budget $N$ counts its \emph{nonroot} nodes. Verifying a budget-$N$ tree therefore packs $N{+}1$ tokens, the pending root included, into one target pass. Following SpecInfer and Medusa \citep{miao2024specinfer,cai2024medusa}, this pass uses an attention mask under which each node sees only the prefix and its own ancestors, and Lemma~\ref{lem:mask} shows that it reproduces exactly what $N{+}1$ separate autoregressive forward passes would compute. Under greedy decoding, the round commits the longest tree path that agrees with the target and appends the target's next token, which becomes the new pending root.

\paragraph{Sampled decoding.}
Our optimality analysis in \S\ref{sec:method} uses greedy verification. For temperature $T{>}0$, \method runs a \emph{proposal-aware stochastic tree verifier}, which applies recursive speculative sampling without replacement to the retained siblings at each node. By Theorem~\ref{thm:stoch}, its committed tokens follow the distribution of autoregressive sampling from the target at temperature $T$.

\section{The \method Width Policy}
\label{sec:method}

\method changes the verifier input while keeping the target, drafter weights, feature-cache contract, and decoding rule fixed.

\subsection{The cost of width}
\label{sec:cost}

\paragraph{The width trade-off.}
Adding a candidate node can raise the committed round length $R$ but adds selection, heap, mask, packing, target-forward, and KV-gather work. A wider tree pays off only when the gain in $R$ outweighs that work. The component probe of Table~\ref{tab:cost} isolates the target-forward part of that cost, and Theorem~\ref{thm:width} turns it into a width $N^\ast$.

\paragraph{Draft side.}
One drafter pass already scores all $L$ future positions. Widening a tree within that block adds timed host-side work but no drafter call. Depth beyond $L$ needs another draft step.

\paragraph{Verify side.}
The target-forward latency $\ell_{\mathrm{tgt}}(n)$ in Figure~\ref{fig:cost} and Table~\ref{tab:cost} grows only locally with the packed-token count $n=N{+}1$. From $n{=}16$ to $n{=}64$ it rises only $+12\%$ on Blackwell-8B and a few percent on every other GPU and target pair, then jumps at the Blackwell-8B $n{=}128$ kernel/tile boundary alone. Over $n\in[16,128]$ a linear fit $\ell_{\mathrm{tgt}}(n)\approx c_0 + c^{\mathrm{fw}}_1 n$ with intercept $c_0$ and forward slope $c^{\mathrm{fw}}_1$ approximates the measurements, with probe details in Appendix~\ref{app:more}.

\begin{figure}[t]
\begin{minipage}[t]{0.48\linewidth}
\centering
\includegraphics[width=\linewidth]{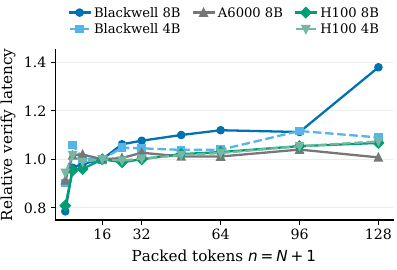}
\caption{Batch-one target-forward latency relative to 16 packed tokens across GPUs and targets, with packed count $n=N+1$.}
\label{fig:cost}
\end{minipage}\hfill
\begin{minipage}[t]{0.48\linewidth}
\centering
\includegraphics[width=\linewidth]{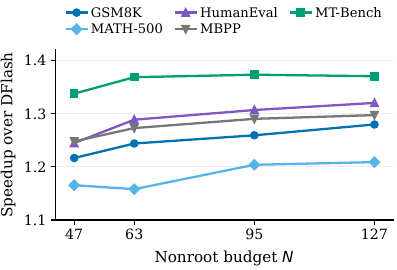}
\caption{Qwen3-8B width sweeps on H100 SXM across five domains. Speedup normalizes to standard DFlash using decode-only latency.}
\label{fig:h100sweep}
\end{minipage}
\end{figure}

\subsection{Budget-optimal candidate trees}
\label{sec:theory}
\label{sec:tree-belief}

The results here are round-local and greedy. Using the convention of \S\ref{sec:setup}, a \emph{candidate tree} is a prefix-closed set $S$ of nonempty strings after the root $b$, with depth at most $L$ and budget $N=|S|$. Under greedy verification, let $Y$ be the target's greedy continuation after $b$. The accepted-prefix length, the committed round length, and the target prefix law of a node $v$ are
\[
A(S)=\max\{k: Y_{1:k}\in S\},\qquad R(S)=A(S)+1,\qquad \pi^\star(v)=\Pr\!\left[Y_{1:|v|}=v\mid x,b\right].
\]
The fixed-belief results replace $\pi^\star$ by any prefix belief $\rho(v)$, nonincreasing along prefixes, and write $\mathbb{E}_{\rho}$ for expectation under a law with these prefix masses. By Lemma~\ref{lem:nodesum}, expected accepted length then decomposes into $\sum_{v\in S}\rho(v)$, which turns tree selection into a top-$N$ choice.

\begin{theorem}[Top-prefix optimality under a fixed belief]
\label{thm:tree}
Let $\mathcal{U}$ be a fixed finite prefix-closed universe of feasible nonroot strings ($N\le|\mathcal{U}|$) and let $\rho$ be prefix masses, nonincreasing from ancestors to descendants. Among prefix-closed $S\subseteq\mathcal{U}$ with $|S|=N$, $\mathbb{E}_{\rho}[A(S)]$ is maximized by the $N$ strings in $\mathcal{U}$ with largest $\rho$ (ties toward prefixes), and this set is itself prefix-closed.
\end{theorem}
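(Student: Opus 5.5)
The argument has two steps: reduce the objective to a sum over nodes, then show that the greedy top-$N$ set is feasible. By Lemma~\ref{lem:nodesum}, for any prefix-closed $S$ we have $\mathbb{E}_{\rho}[A(S)]=\sum_{v\in S}\rho(v)$. The reason is that $A(S)=\sum_{k\ge1}\mathbf{1}[Y_{1:k}\in S]$: prefix-closure makes the event $\{Y_{1:k}\in S\}$ nested in $k$, so $A(S)$ counts exactly the depths $k$ at which $Y_{1:k}\in S$. Each such indicator corresponds to the unique node $v=Y_{1:k}$ of length $k$, and its expectation is $\rho(v)$. The problem is therefore to maximize an additive set function $\sum_{v\in S}\rho(v)$ over prefix-closed subsets of $\mathcal{U}$ of size $N$.

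Drop the prefix-closure constraint for a moment. Over all $N$-subsets of $\mathcal{U}$, the sum $\sum_{v\in S}\rho(v)$ is maximized by any set of $N$ elements with the largest $\rho$, by a standard exchange argument. If $S$ omits some $u$ with $\rho(u)>\rho(w)$ for some $w\in S$, then swapping $w$ for $u$ does not decrease the sum. Hence the relaxed optimum is an upper bound for every feasible $S$. It then suffices to show that some top-$N$ set, chosen with the stated tie rule, is prefix-closed, because that set is feasible and attains the upper bound.

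Feasibility is the main step, and it is where ties need care. I would make the tie rule precise as a strict total order $\prec$ on $\mathcal{U}$: rank by $\rho$ descending, break ties by length ascending (so an ancestor precedes its descendants), and finish with any fixed order, say lexicographic. Let $T$ be the first $N$ elements in this order. Take $v\in T$ and a proper ancestor $u$ of $v$. Since $\mathcal{U}$ is prefix-closed, $u\in\mathcal{U}$. By monotonicity $\rho(u)\ge\rho(v)$, and because $|u|<|v|$, the ordering gives $u\prec v$ both when $\rho(u)>\rho(v)$ and when $\rho(u)=\rho(v)$. An initial segment of $\prec$ is closed downward in $\prec$, so $u\in T$. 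Thus $T$ is prefix-closed and has size $N$, which proves the theorem.

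I would add a remark explaining the tie rule. Without it, a naive top-$N$ choice could, when $\rho(u)=\rho(v)$, include a child $v$ and exclude its parent $u$. That set is infeasible, even though some optimal set still exists. The length tie-break removes this case, and it is the only point where the monotonicity of $\rho$ along prefixes is used.
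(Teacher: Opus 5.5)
Your proposal is correct and follows the paper's proof essentially step for step. Both reduce the objective to $\sum_{v\in S}\rho(v)$ via Lemma~\ref{lem:nodesum}, bound it by the sum of the $N$ largest masses $\sum_{i\le N}\rho_{(i)}$, and show that the first $N$ elements are prefix-closed when strings are ordered by descending $\rho$, with ties broken toward shorter strings and then lexicographically. The only difference is cosmetic: you obtain the upper bound through an exchange argument over unconstrained $N$-subsets, while the paper states it directly through the order statistics.
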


By Corollary~\ref{cor:plugin-tree}, setting $\rho=\hat\pi$ from \eqref{eq:plugin} gives the exact top-$N$ tree for the plug-in objective inside the retained universe, under the one-pass marginal ranking interface of Assumption~\ref{asm:indep}, which describes the drafter and not the target law. The public decoder keeps the top $K{=}8$ tokens at each future position and expands the tree best-first in decreasing $\hat\pi$ order, a rank cap that matches the full-rank tree's accepted length to within $1.1\%$ at the evaluated budgets, as measured in Appendix~\ref{app:rankcap}.

Proposition~\ref{prop:regret} bounds the gap between the retained-universe oracle tree and the plug-in tree by a sum of prefix-belief estimation errors over the symmetric difference of the two trees. The drafter is well calibrated at the first tree level, as reported in Appendix~\ref{app:analysis}.

\subsection{Cost-aware width}
\label{sec:width}

Theorem~\ref{thm:tree} says which nodes to include once $N$ is fixed. The cost slope determines how large $N$ should be. Write $g(N)=\mathbb{E}_{\rho}[A(S_N)]$ for the expected accepted length of the optimal budget-$N$ tree $S_N$ over a fixed finite prefix-closed candidate universe, and let the round time be
\[
\ell(N)\;=\;c_{\mathrm{draft}} + h(N) + \ell_{\mathrm{tgt}}(n_0+N),
\]
where $c_{\mathrm{draft}}$ is the drafter forward cost, $h(N)$ is packing, mask, heap, and KV-gather overhead, $n_0=1$ counts the always-verified root, and $\ell_{\mathrm{tgt}}$ is the target-forward latency of \S\ref{sec:cost}.

\begin{theorem}[Marginal stopping rule]
\label{thm:width}
The function $g$ is concave in $N$ with increments $g(N)-g(N-1)=\rho_{(N)}$, where $\rho_{(N)}$ is the $N$-th largest prefix mass in the universe (an order statistic).
Assume $h(N)+\ell_{\mathrm{tgt}}(n_0+N)$ is locally affine with nonnegative total slope $c_1$ over the allowed range.
Then throughput $(g(N)+1)/\ell(N)$ is unimodal, and from budget $N$, adding node $N{+}1$ does not reduce it if and only if
\[
\underbrace{\rho_{(N+1)}}_{\text{prefix mass of next node}} \;\ge\; \underbrace{\frac{g(N)+1}{\ell(N)}}_{\text{throughput (tok/ms)}}\cdot \underbrace{c_1}_{\text{cost/node (ms)}}.
\]
A maximizer is found by adding nodes until this condition first fails, with boundary handling at the smallest and largest allowed budgets.
\end{theorem}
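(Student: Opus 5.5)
The plan has three parts: get the increments of $g$ from Theorem~\ref{thm:tree}, reduce the throughput comparison to a sign condition, and show that this sign changes at most once.

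\emph{Increments of $g$.} By Theorem~\ref{thm:tree}, $S_N$ can be taken to be the $N$ strings of $\mathcal{U}$ with largest $\rho$, with ties broken toward prefixes. With this tie-breaking the sets are nested, $S_{N}=S_{N-1}\cup\{v_{(N)}\}$, and each is prefix-closed. Lemma~\ref{lem:nodesum} gives $g(N)=\sum_{v\in S_N}\rho(v)$, so $g(N)-g(N-1)=\rho(v_{(N)})=\rho_{(N)}$. The order statistics are nonincreasing in $N$, so the increments are nonincreasing and $g$ is concave on the integers. No further work is needed here beyond the observation that the nested top-$N$ sets realize the optimum simultaneously for every $N$.

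\emph{Marginal comparison.} On the allowed range, write $\ell(N)=a+c_1N$ with $a=c_{\mathrm{draft}}+$ the affine intercept, so that $\ell(N+1)=\ell(N)+c_1$. Let $T(N)=(g(N)+1)/\ell(N)$. Since all times are positive, cross-multiplying gives
\[
T(N+1)\ge T(N)\iff (g(N)+\rho_{(N+1)}+1)\,\ell(N)\ge (g(N)+1)(\ell(N)+c_1).
\]
This simplifies to $\rho_{(N+1)}\ell(N)\ge (g(N)+1)c_1$, and dividing by $\ell(N)>0$ yields exactly the stated condition.

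\emph{Unimodality (the main step).} Define $D(N)=\rho_{(N+1)}\ell(N)-(g(N)+1)c_1$, which has the same sign as $T(N+1)-T(N)$. Then
\[
D(N+1)-D(N)=\rho_{(N+2)}(\ell(N)+c_1)-\rho_{(N+1)}\ell(N)-\rho_{(N+1)}c_1=(\rho_{(N+2)}-\rho_{(N+1)})\,\ell(N+1)\le 0,
\]
so $D$ is nonincreasing. Consequently $D$ changes sign at most once, from $\ge0$ to $<0$. Hence $T$ is nondecreasing up to the first $N$ with $D(N)<0$ and nonincreasing afterwards, which is unimodality. Greedily adding nodes until the condition first fails therefore stops at a maximizer. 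If the condition never fails, the largest allowed budget (at most $|\mathcal{U}|$) is optimal. If it fails at the smallest allowed budget, that budget is optimal. Plateaus where $D=0$ give ties, and any point on such a plateau is a maximizer.

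The point I expect to need the most care is that affinity is only local. The telescoping identity uses $\ell(N+1)-\ell(N)=c_1$ with the same $c_1$ at every step, so the argument must be restricted to the range where this holds. The hypothesis $c_1\ge0$ enters only to keep $\ell$ positive across that range; the sign argument itself needs $\ell>0$ and nothing more.
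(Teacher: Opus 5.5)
Your proof is correct. The increment step and the cross-multiplication that gives the stopping condition match the paper's proof almost line for line. The real difference is how you obtain unimodality.

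The paper extends $g$ by linear interpolation to a continuous budget $\nu$. It then shows that $f(\nu)=(1+g(\nu))/\ell(\nu)$ is quasiconcave: for each level $\lambda$, the superlevel set $\{f\ge\lambda\}$ is the nonnegativity region of the concave function $1+g(\nu)-\lambda\ell(\nu)$, which is an interval. Restricting to integers gives unimodality in the weak, quasiconcave sense, and the paper concludes that the first failure of the test is a maximizer.

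You stay on the integers and show that the numerator $D(N)$ of $T(N+1)-T(N)$ satisfies $D(N+1)-D(N)=(\rho_{(N+2)}-\rho_{(N+1)})\,\ell(N+1)\le 0$.

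\begin{itemize}
\item \textbf{What your route buys.} It is more elementary, and it proves something slightly sharper that is exactly what the greedy rule needs. The marginal test itself is monotone in $N$, so once it fails it never passes again. Any flat stretch of $T$ can therefore only sit at the top. A bare quasiconcavity argument allows flat stretches below the top, so the first-failure claim needs the extra interval reasoning that the paper leaves implicit.
\item \textbf{What the paper's route buys.} It is the standard fractional-programming fact that a concave nonnegative numerator over a positive affine denominator is quasiconcave. The same superlevel-set argument still works if the cost is convex rather than locally affine, because $\lambda\ge 0$ is the only relevant case. Your telescoping identity uses the same slope $c_1$ at every step, so it would pick up an extra, nonpositive, term in that setting.
\end{itemize}

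Two smaller points. Your nestedness observation is not needed for $g$ as a value function, since Theorem~\ref{thm:tree} already gives $g(N)=\sum_{i\le N}\rho_{(i)}$. It is, however, what lets best-first expansion realize every budget at once. Your remark that $c_1\ge 0$ is used only to keep $\ell$ positive applies equally to the paper's argument.
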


The rule keeps adding candidates while the next node's prefix mass outweighs the system's token-time exchange rate. In our experiments we set $c_1$ to the forward-only slope $c^{\mathrm{fw}}_1$ that \S\ref{sec:cost} fits to the probe measurements. Because the probe omits the overhead $h$, whose slope is nonnegative, this choice under-estimates $c_1$, and the sweeps of \S\ref{sec:exp} bound the resulting error.

\paragraph{Scope.}
The affine assumption is local. For example, hybrid linear-attention targets pay a fixed chunk-kernel entry cost that the chain and the tree share, after which their cost is again locally affine, as Appendix~\ref{app:hybrid_cost} shows.

\paragraph{Empirical validation.}
On Blackwell/Qwen3-8B the threshold is about $0.015$, crossed between $N{=}47$ and $63$, so we deploy $N^\ast{=}63$. The same rule selects wider trees on flatter cost curves. Appendix~\ref{app:more} gives the cost slopes and sweep boundaries.

\paragraph{Width under serving load.}
\label{sec:serving-theory}
Under load, $B$ concurrent requests share one verifier budget $M$, the sum of their widths. One drafter pass serves the batch at any width, while verification grows with the total packed count. Because expected accepted length is additive across requests, global best-first expansion maximizes it over the disjoint candidate forests, as Proposition~\ref{prop:alloc} proves. The batched stopping rule of Proposition~\ref{prop:serve-stop} adds the next global node while its mass reaches the batch token-time exchange rate. Both results reduce to the batch-one rule at $B{=}1$ and permit unequal widths.

\subsection{The \method decoder}
\label{sec:decoder}

\method is a round-level decoder for the unmodified one-pass DFlash interface, whose DFlash-specific part is the pending-root round contract of \S\ref{sec:setup}. Given the cache-backed prefix $x$, pending root $b$, feature cache, and budget $N$, each round runs the unchanged DFlash drafter once on the block $[b,\mathrm{mask},\ldots,\mathrm{mask}]$, keeps the top $K$ tokens at each future position and enumerates the top-$N$ prefix scores of Theorem~\ref{thm:tree} inside that universe. It verifies the packed tree in one ancestor-masked target pass, then walks the tree, emits the accepted path and the next pending root, and rebuilds the caches, with the full algorithm in Appendix~\ref{app:impl}.

\paragraph{Exactness and sampling.}
At temperature 0 the walk commits the target greedy continuation by construction. In fp32, \method and standard DFlash both reproduce autoregressive decoding token for token on Qwen3-8B and Qwen3-4B. In bf16, both depart from autoregressive decoding only at near-ties of the top two target logits, where shape-dependent rounding can flip the greedy token.

For $T{>}0$ the proposal-aware stochastic verifier of \S\ref{sec:setup} instantiates SpecInfer/SpecTr-style recursive speculative sampling \citep{miao2024specinfer,sun2023spectr,sun2024block,weng2025traversal,zhou2026hsd} over the prefix-closed tree, a rule that Theorem~\ref{thm:stoch} proves distribution-preserving. The total variation of its committed tokens from autoregressive sampling is below the Monte-Carlo floor in fp64 and about $0.004$ in bf16, as Appendix~\ref{app:stats} details.
\section{Experiments}
\label{sec:exp}

\paragraph{Setup.}
Our main experiments use the public, unmodified z-lab DFlash-b16 heads for non-thinking Qwen3-4B and Qwen3-8B \citep{yang2025qwen3}, bf16, batch~1, and greedy decoding except at $T{=}1$.

The rule thresholds drafter-score order statistics from $1{,}500$ offline GSM8K rounds, using one curve across deployments and changing only the measured cost slope. Width selection uses scores and costs, not timing-sweep outcomes. Appendix~\ref{app:stats} describes the paired prompt-level bootstrap.

We time autoregressive (AR) decoding, DFlash, and \method in one shared decoding loop on an H100 SXM across five math, code, and chat domains, using 40 prompts in each domain and the same token budget for every method. Runs on Blackwell (RTX~PRO~6000 Server Edition) and A6000 cover GSM8K, MT-Bench, and HumanEval. We report decode-only steady-state time $t$ in ms/token, speedup $t_{\mathrm{AR}}/t$, and gain $t_{\mathrm{DFlash}}/t_{\mathrm{CAST}}-1$ over DFlash, averaging domain ratios for each setting. Appendices~\ref{app:more} and~\ref{app:h100sxm} give timer boundaries, datasets, and budgets.

\paragraph{Speedup at the predicted width.}
In Table~\ref{tab:h100main}, \method improves on DFlash across every H100 domain for Qwen3-8B and Qwen3-4B. Both targets average $5.62\times$ speedup over AR, with longer committed rounds throughout.

EAGLE-3 uses its official code and public Qwen3 heads at the reported settings, and it shares the machine, prompts, decode-only timer, and AR baseline of the DFlash and \method rows. The same cost rule selects $N^\ast{=}127$ for the block-10 head of LLaMA-3.1-8B \citep{grattafiori2024llama3}, extending the benefit to a second target family and block size. Appendix~\ref{app:llama} gives that sweep.

The nearly flat H100 SXM cost curve keeps the stopping threshold below candidate mass through the largest evaluated budget, and the sweeps in Figure~\ref{fig:h100sweep} peak at or near that budget in every domain.

\begin{table}[!ht]
\caption{Qwen3 decoding on H100 SXM. Cells report speedup over AR and committed round length $R$. Parentheses give baseline packed tokens or the \method nonroot budget.}
\label{tab:h100main}
\centering
\footnotesize
\setlength{\tabcolsep}{1.5pt}
\begin{tabular}{c l cc cc cc cc cc cc}
\toprule
\multirow{2}[2]{*}{Model} & \multirow{2}[2]{*}{Method} & \multicolumn{4}{c}{\textsc{Math}} & \multicolumn{4}{c}{\textsc{Code}} & \multicolumn{2}{c}{\textsc{Chat}} & \\
\cmidrule(lr){3-6}\cmidrule(lr){7-10}\cmidrule(lr){11-12}
 & & \multicolumn{2}{c}{GSM8K} & \multicolumn{2}{c}{MATH-500} & \multicolumn{2}{c}{HumanEval} & \multicolumn{2}{c}{MBPP} & \multicolumn{2}{c}{MT-Bench} & \multicolumn{2}{c}{\textit{Avg.}}\\
\midrule
\rowcolor{grouprow} \multicolumn{2}{c}{Temperature $=0$} & \tabsmall Speedup & \tabsmall $R$ & \tabsmall Speedup & \tabsmall $R$ & \tabsmall Speedup & \tabsmall $R$ & \tabsmall Speedup & \tabsmall $R$ & \tabsmall Speedup & \tabsmall $R$ & \tabsmall Speedup & \tabsmall $R$\\
\midrule
 & EAGLE-3 \hfill\tabsmall\secondarycolor{(16)}\hspace{-\fill} & 2.58$\times$ & 4.45 & 2.37$\times$ & 4.11 & 2.23$\times$ & 3.97 & 2.37$\times$ & 4.11 & 2.07$\times$ & 3.74 & 2.32$\times$ & 4.08\\
 & EAGLE-3 \hfill\tabsmall\secondarycolor{(60)}\hspace{-\fill} & 3.08$\times$ & 5.34 & 2.79$\times$ & 4.88 & 2.69$\times$ & 4.76 & 2.80$\times$ & 4.93 & 2.41$\times$ & 4.49 & 2.75$\times$ & 4.88\\
  & \cellcolor{baselinerow}DFlash \hfill\tabsmall\secondarycolor{(16)}\hspace{-\fill} &\cellcolor{baselinerow}4.54$\times$ & \cellcolor{baselinerow}6.23 & \cellcolor{baselinerow}5.57$\times$ & \cellcolor{baselinerow}7.81 & \cellcolor{baselinerow}4.67$\times$ & \cellcolor{baselinerow}6.52 & \cellcolor{baselinerow}4.82$\times$ & \cellcolor{baselinerow}6.56 & \cellcolor{baselinerow}2.30$\times$ & \cellcolor{baselinerow}4.06 & \cellcolor{baselinerow}4.38$\times$ & \cellcolor{baselinerow}6.24\\
\multirow{-4}{*}{Q3-8B} & \cellcolor{oursrow}\method \hfill\tabsmall\secondarycolor{($N^\ast{=}127$)}\hspace{-\fill} & \cellcolor{oursrow}\textbf{5.81$\times$} & \cellcolor{oursrow}\textbf{8.46} & \cellcolor{oursrow}\textbf{6.74$\times$} & \cellcolor{oursrow}\textbf{9.87} & \cellcolor{oursrow}\textbf{6.16$\times$} & \cellcolor{oursrow}\textbf{9.04} & \cellcolor{oursrow}\textbf{6.25$\times$} & \cellcolor{oursrow}\textbf{9.09} & \cellcolor{oursrow}\textbf{3.16$\times$} & \cellcolor{oursrow}\textbf{5.52} & \cellcolor{oursrow}\textbf{5.62$\times$} & \cellcolor{oursrow}\textbf{8.40}\\
\addlinespace[2pt]
 & EAGLE-3 \hfill\tabsmall\secondarycolor{(16)}\hspace{-\fill} & 1.81$\times$ & 3.29 & 1.79$\times$ & 3.11 & 1.62$\times$ & 3.00 & 1.71$\times$ & 3.03 & 1.59$\times$ & 2.84 & 1.70$\times$ & 3.05\\
 & EAGLE-3 \hfill\tabsmall\secondarycolor{(60)}\hspace{-\fill} & 2.08$\times$ & 3.81 & 2.03$\times$ & 3.61 & 1.86$\times$ & 3.41 & 1.94$\times$ & 3.46 & 1.87$\times$ & 3.34 & 1.96$\times$ & 3.53\\
  & \cellcolor{baselinerow}DFlash \hfill\tabsmall\secondarycolor{(16)}\hspace{-\fill} &\cellcolor{baselinerow}4.40$\times$ & \cellcolor{baselinerow}6.13 & \cellcolor{baselinerow}5.34$\times$ & \cellcolor{baselinerow}7.49 & \cellcolor{baselinerow}4.61$\times$ & \cellcolor{baselinerow}6.49 & \cellcolor{baselinerow}4.96$\times$ & \cellcolor{baselinerow}6.72 & \cellcolor{baselinerow}2.30$\times$ & \cellcolor{baselinerow}3.95 & \cellcolor{baselinerow}4.32$\times$ & \cellcolor{baselinerow}6.15\\
\multirow{-4}{*}{Q3-4B} & \cellcolor{oursrow}\method \hfill\tabsmall\secondarycolor{($N^\ast{=}127$)}\hspace{-\fill} & \cellcolor{oursrow}\textbf{5.79$\times$} & \cellcolor{oursrow}\textbf{8.47} & \cellcolor{oursrow}\textbf{6.62$\times$} & \cellcolor{oursrow}\textbf{9.76} & \cellcolor{oursrow}\textbf{6.04$\times$} & \cellcolor{oursrow}\textbf{9.02} & \cellcolor{oursrow}\textbf{6.34$\times$} & \cellcolor{oursrow}\textbf{9.11} & \cellcolor{oursrow}\textbf{3.29$\times$} & \cellcolor{oursrow}\textbf{5.78} & \cellcolor{oursrow}\textbf{5.62$\times$} & \cellcolor{oursrow}\textbf{8.43}\\
\midrule
\rowcolor{grouprow} \multicolumn{2}{c}{Temperature $=1$} & \tabsmall Speedup & \tabsmall $R$ & \tabsmall Speedup & \tabsmall $R$ & \tabsmall Speedup & \tabsmall $R$ & \tabsmall Speedup & \tabsmall $R$ & \tabsmall Speedup & \tabsmall $R$ & \tabsmall Speedup & \tabsmall $R$\\
\midrule
 & EAGLE-3 \hfill\tabsmall\secondarycolor{(16)}\hspace{-\fill} & 2.22$\times$ & 4.30 & 2.32$\times$ & 3.91 & 2.19$\times$ & 3.90 & 2.22$\times$ & 3.89 & 1.82$\times$ & 3.52 & 2.15$\times$ & 3.90\\
 & EAGLE-3 \hfill\tabsmall\secondarycolor{(60)}\hspace{-\fill} & 2.63$\times$ & 5.17 & 2.65$\times$ & 4.62 & 2.52$\times$ & 4.57 & 2.58$\times$ & 4.66 & 2.16$\times$ & 4.29 & 2.51$\times$ & 4.66\\
  & \cellcolor{baselinerow}DFlash \hfill\tabsmall\secondarycolor{(16)}\hspace{-\fill} &\cellcolor{baselinerow}4.11$\times$ & \cellcolor{baselinerow}5.69 & \cellcolor{baselinerow}4.39$\times$ & \cellcolor{baselinerow}6.36 & \cellcolor{baselinerow}4.08$\times$ & \cellcolor{baselinerow}5.77 & \cellcolor{baselinerow}3.83$\times$ & \cellcolor{baselinerow}5.31 & \cellcolor{baselinerow}2.15$\times$ & \cellcolor{baselinerow}3.63 & \cellcolor{baselinerow}3.71$\times$ & \cellcolor{baselinerow}5.35\\
\multirow{-4}{*}{Q3-8B} & \cellcolor{oursrow}\method \hfill\tabsmall\secondarycolor{($N^\ast{=}127$)}\hspace{-\fill} & \cellcolor{oursrow}\textbf{5.16$\times$} & \cellcolor{oursrow}\textbf{7.93} & \cellcolor{oursrow}\textbf{5.39$\times$} & \cellcolor{oursrow}\textbf{8.46} & \cellcolor{oursrow}\textbf{5.35$\times$} & \cellcolor{oursrow}\textbf{8.27} & \cellcolor{oursrow}\textbf{4.97$\times$} & \cellcolor{oursrow}\textbf{7.56} & \cellcolor{oursrow}\textbf{2.94$\times$} & \cellcolor{oursrow}\textbf{5.30} & \cellcolor{oursrow}\textbf{4.76$\times$} & \cellcolor{oursrow}\textbf{7.50}\\
\addlinespace[2pt]
 & EAGLE-3 \hfill\tabsmall\secondarycolor{(16)}\hspace{-\fill} & 1.60$\times$ & 3.21 & 1.68$\times$ & 3.00 & 1.67$\times$ & 3.00 & 1.64$\times$ & 2.98 & 1.38$\times$ & 2.80 & 1.59$\times$ & 3.00\\
 & EAGLE-3 \hfill\tabsmall\secondarycolor{(60)}\hspace{-\fill} & 1.83$\times$ & 3.75 & 1.89$\times$ & 3.48 & 1.83$\times$ & 3.41 & 1.81$\times$ & 3.42 & 1.53$\times$ & 3.22 & 1.78$\times$ & 3.46\\
  & \cellcolor{baselinerow}DFlash \hfill\tabsmall\secondarycolor{(16)}\hspace{-\fill} &\cellcolor{baselinerow}4.16$\times$ & \cellcolor{baselinerow}5.80 & \cellcolor{baselinerow}4.54$\times$ & \cellcolor{baselinerow}6.55 & \cellcolor{baselinerow}4.27$\times$ & \cellcolor{baselinerow}5.96 & \cellcolor{baselinerow}4.29$\times$ & \cellcolor{baselinerow}5.96 & \cellcolor{baselinerow}2.21$\times$ & \cellcolor{baselinerow}3.79 & \cellcolor{baselinerow}3.90$\times$ & \cellcolor{baselinerow}5.61\\
\multirow{-4}{*}{Q3-4B} & \cellcolor{oursrow}\method \hfill\tabsmall\secondarycolor{($N^\ast{=}127$)}\hspace{-\fill} & \cellcolor{oursrow}\textbf{5.14$\times$} & \cellcolor{oursrow}\textbf{7.88} & \cellcolor{oursrow}\textbf{5.56$\times$} & \cellcolor{oursrow}\textbf{8.69} & \cellcolor{oursrow}\textbf{5.54$\times$} & \cellcolor{oursrow}\textbf{8.59} & \cellcolor{oursrow}\textbf{5.41$\times$} & \cellcolor{oursrow}\textbf{8.27} & \cellcolor{oursrow}\textbf{3.02$\times$} & \cellcolor{oursrow}\textbf{5.44} & \cellcolor{oursrow}\textbf{4.93$\times$} & \cellcolor{oursrow}\textbf{7.77}\\
\bottomrule
\end{tabular}

\end{table}

\begin{table}[t]
\caption{Predicted and sweep-oracle widths across the eight settings. Gain over DFlash is averaged over domains. Gap is in percentage points. $\ddagger$ marks cells measured on a second A6000 host.}
\label{tab:rule}
\centering
\small
\setlength{\tabcolsep}{3pt}
\begin{tabular}{l >{\columncolor{oursrow}}r >{\columncolor{oursrow}}r rr r r}
\toprule
\rowcolor{grouprow}
Setting (drafter block) & $N^\ast$ & Gain at $N^\ast$ & \multicolumn{2}{c}{Oracle width (gain)} & Gap & Gain at $N{=}127$\\
\midrule
H100 SXM, Qwen3-8B (b16) & 127 & \textcolor{posdelta}{+29.5\%} & 127 & \textcolor{posdelta}{+29.5\%} & +0.0 & \textcolor{posdelta}{+29.5\%}\\
H100 SXM, Qwen3-4B (b16) & 127 & \textcolor{posdelta}{+31.5\%} & 127 & \textcolor{posdelta}{+31.5\%} & +0.0 & \textcolor{posdelta}{+31.5\%}\\
Blackwell, Qwen3-8B (b16) & 63 & \textcolor{posdelta}{+19.8\%} & 95 & \textcolor{posdelta}{+21.5\%} & -1.7 & \textcolor{posdelta}{+2.4\%}\\
Blackwell, Qwen3-4B (b16) & 95 & \textcolor{posdelta}{+27.3\%} & 127 & \textcolor{posdelta}{+29.0\%} & -1.7 & \textcolor{posdelta}{+29.0\%}\\
A6000, Qwen3-8B (b16) & 95 & \textcolor{posdelta}{+28.7\%} & 95 & \textcolor{posdelta}{+28.7\%} & +0.0 & \textcolor{posdelta}{+25.1\%}$^\ddagger$\\
A6000, Qwen3-4B (b16) & 95 & \textcolor{posdelta}{+32.6\%} & 95 & \textcolor{posdelta}{+32.6\%} & +0.0 & \textcolor{posdelta}{+31.7\%}$^\ddagger$\\
H100 SXM, LLaMA-3.1-8B (b10) & 127 & \textcolor{posdelta}{+27.4\%} & 127 & \textcolor{posdelta}{+27.4\%} & +0.0 & \textcolor{posdelta}{+27.4\%}\\
H100 SXM, Qwen3-Coder-30B-A3B (b16) & 127 & \textcolor{posdelta}{+35.6\%} & 127 & \textcolor{posdelta}{+35.6\%} & +0.0 & \textcolor{posdelta}{+35.6\%}\\
\bottomrule
\end{tabular}

\end{table}

\paragraph{Width across hardware.}
The Blackwell-8B cost cliff erodes most of the gain at the largest width. Gains plateau at $N{=}63$--$95$ and fall at $127$. Flatter A6000 and Blackwell-4B curves support wider trees. Appendix~\ref{app:more} gives their full sweeps.

Across all eight settings of Table~\ref{tab:rule}, the predicted gain is within $1.7$ percentage points of the sweep oracle. The sweeps use $N\in\{15,31,47,63,95,127\}$. Among them, the H100 SXM Qwen3-8B and Qwen3-4B sweeps start at $N{=}47$, and the A6000 sweeps end at $N{=}95$. The settings cover three GPUs, two model families, two block sizes, and a 30B sparse mixture-of-experts target.

\begin{figure}[t]
\centering
\includegraphics[width=\linewidth]{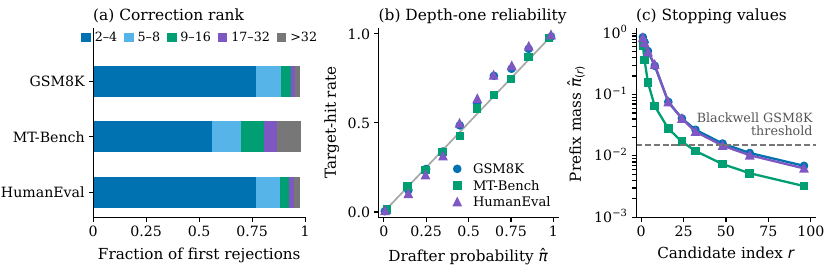}
\caption{Qwen3-8B greedy replay diagnostics on 40-prompt slices. (a) Target-correction rank at the first DFlash rejection. (b) Depth-one reliability. (c) Sorted candidate prefix masses. The dashed line is the Blackwell GSM8K threshold.}
\label{fig:analysis}
\end{figure}

\begin{wrapfigure}[20]{R}{0.48\linewidth}
\centering
\includegraphics[width=\linewidth]{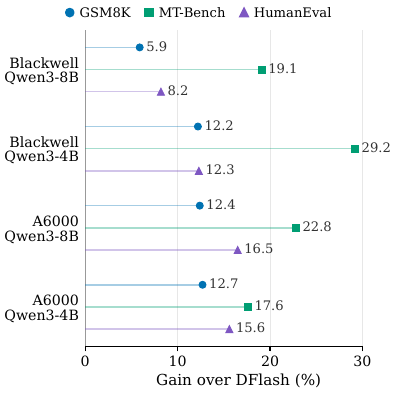}
\caption{Gain over DFlash at 16 packed tokens.}
\label{tab:shape-summary}
\end{wrapfigure}

\paragraph{Candidate scores.}
Figure~\ref{fig:analysis} uses the same 40-prompt offline replay slices as the main rows. In this replay at a fixed budget of 47 nonroot nodes, neither a fixed branching schedule nor uniform binary branching commits longer rounds than the 15-node chain in any domain, whereas the top-$N$ tree does in every domain. On GSM8K its committed round length reaches $7.7$ against $6.3$ for the chain, and Appendix~\ref{app:analysis} gives the remaining cells. At the first chain rejection, the target correction lies at draft ranks 2--4 in $56$--$77\%$ of events. Depth-one reliability and the falling prefix masses explain the ranking and stopping rule.

\paragraph{Shape at a matched budget.}
At 16 packed tokens, \method keeps $N{=}15$ nonroot nodes from the same block-16 drafter as DFlash, so the two decoders differ only in tree shape. This shape alone makes \method $5.9$ to $29.2\%$ faster than DFlash in all twelve Blackwell and A6000 cells of Figure~\ref{tab:shape-summary}. Table~\ref{tab:width-shape} gives the full round lengths and timings.

\paragraph{Sampled decoding.}
The $T{=}1$ blocks of Table~\ref{tab:h100main} keep the same rule-predicted width and replace greedy verification by the proposal-aware stochastic verifier of Theorem~\ref{thm:stoch}. Sampling shortens the committed rounds of DFlash and \method, yet the gain of \method over DFlash stays between $+22\%$ and $+37\%$ in every domain on both targets. On Blackwell with Qwen3-8B, the gain exceeds $15\%$ under each of three sampling seeds, as Table~\ref{tab:bootstrap} shows.

\paragraph{Robustness.}
On Blackwell, a paired bootstrap over expanded prompt sets places every 95\% confidence interval for the mean prompt-level gain over DFlash above $+14\%$, as Table~\ref{tab:bootstrap} shows. The gain also holds for longer generations and prefixes on H100 SXM. On MATH-500 with Qwen3-8B, \method is $21\%$ faster than DFlash at 512 new tokens and $20\%$ faster at 2048. On GovReport summarization prompts cut to 8k tokens, it is $28\%$ faster on Qwen3-8B and $27\%$ faster on Qwen3-4B, within 3 percentage points of the sweep oracle. Appendix~\ref{app:h100sxm} reports these cells.

\begin{wraptable}[16]{R}{0.48\linewidth}
\centering
\footnotesize
\setlength{\tabcolsep}{2pt}
\setlength{\abovecaptionskip}{0pt}
\caption{Mean gain over DFlash on \mbox{Blackwell} at the predicted width $N^\ast$, which is $63$ for Qwen3-8B and $95$ for Qwen3-4B. Brackets give 95\% bootstrap intervals for greedy decoding and ranges over three seeds for sampling.}
\label{tab:bootstrap}
\begin{tabular}{*{3}{>{\centering\arraybackslash}p{\dimexpr(\linewidth-6\tabcolsep)/3\relax}}}
\toprule
\rowcolor{grouprow}
GSM8K & MT-Bench & HumanEval\\
\midrule
\grouphead{3}{Qwen3-8B, greedy decoding}\\
\textcolor{posdelta}{+18.1\%} & \textcolor{posdelta}{+24.6\%} & \textcolor{posdelta}{+16.9\%}\\
{\tabsmall$[16.4, 19.8]$} & {\tabsmall$[21.6, 27.7]$} & {\tabsmall$[14.8, 19.0]$}\\
\grouphead{3}{Qwen3-4B, greedy decoding}\\
\textcolor{posdelta}{+25.7\%} & \textcolor{posdelta}{+33.2\%} & \textcolor{posdelta}{+23.2\%}\\
{\tabsmall$[24.0, 27.4]$} & {\tabsmall$[29.4, 37.0]$} & {\tabsmall$[21.1, 25.3]$}\\
\grouphead{3}{Qwen3-8B, sampling at $T{=}1$}\\
\textcolor{posdelta}{+16.7\%} & \textcolor{posdelta}{+30.1\%} & \textcolor{posdelta}{+22.3\%}\\
{\tabsmall$[16.0, 18.0]$} & {\tabsmall$[28.7, 32.3]$} & {\tabsmall$[20.3, 25.3]$}\\
\bottomrule
\end{tabular}
\end{wraptable}

\paragraph{Comparison with EAGLE-3 on LLaMA.}
In Table~\ref{tab:llama-main}, \method at the predicted width averages $3.87\times$ over AR on LLaMA-3.1-8B, against $2.59\times$ for the official EAGLE-3 with a 60-token tree. It is faster than that tree by $30$ to $60\%$ in every domain. On MT-Bench, DFlash alone trails EAGLE-3 at $2.32\times$ against $2.34\times$, while \method reaches $3.05\times$. On GSM8K, \method and the 60-token EAGLE-3 tree commit the same $R{=}6.08$, yet EAGLE-3 runs eight draft passes in each round to build its tree and \method runs one. Appendix~\ref{app:llama} reports every width.

\begin{table}[t]
\caption{LLaMA-3.1-8B greedy decoding with the block-10 DFlash head on H100 SXM. Cells report speedup over AR. Parentheses give baseline packed tokens or the \method nonroot budget.}
\label{tab:llama-main}
\centering
\small
\setlength{\tabcolsep}{4pt}
\begin{tabular}{l *{6}{w{c}{46.68pt}}}
\toprule
\multirow{2}[2]{*}{Method} & \multicolumn{2}{c}{\textsc{Math}} & \multicolumn{2}{c}{\textsc{Code}} & \textsc{Chat} & \\
\cmidrule(lr){2-3}\cmidrule(lr){4-5}\cmidrule(lr){6-6}
 & GSM8K & MATH-500 & HumanEval & MBPP & MT-Bench & \textit{Avg.}\\
\midrule
EAGLE-3 \hfill\tabsmall\secondarycolor{(10)}\hspace{-\fill} & 2.05$\times$ & 1.75$\times$ & 2.28$\times$ & 2.25$\times$ & 1.73$\times$ & 2.01$\times$\\
EAGLE-3 \hfill\tabsmall\secondarycolor{(60)}\hspace{-\fill} & 2.67$\times$ & 2.31$\times$ & 2.80$\times$ & 2.81$\times$ & 2.34$\times$ & 2.59$\times$\\
\rowcolor{baselinerow}DFlash \hfill\tabsmall\secondarycolor{(10)}\hspace{-\fill} & 3.00$\times$ & 2.76$\times$ & 3.60$\times$ & 3.56$\times$ & 2.32$\times$ & 3.05$\times$\\
\ours\method \hfill\tabsmall\secondarycolor{($N^\ast{=}127$)}\hspace{-\fill} & \textbf{3.82$\times$} & \textbf{3.59$\times$} & \textbf{4.38$\times$} & \textbf{4.49$\times$} & \textbf{3.05$\times$} & \textbf{3.87$\times$}\\
\bottomrule
\end{tabular}
\end{table}

\paragraph{Second one-pass drafter.}
In offline replay over the one-pass logits of DART \citep{arxiv-2601-19278}, a best-first tree of 63 nodes lengthens the committed round over the eight-node top-1 chain by $40\%$ on GSM8K and $34\%$ on HumanEval. Timed on an A6000 with Qwen3-8B, DART's own tree decoder runs faster with 60 nodes than with 8 in both domains, reaching a $2.14\times$ speedup over autoregressive decoding on GSM8K and $2.34\times$ on HumanEval. Appendix~\ref{app:dart} details both measurements.

\section{Serving under load}
\label{sec:serving}

No single static width serves the load range. The drafter forward is width-independent, one pass serving all $B$ requests, while the verifier width tax in Figure~\ref{fig:serving_cost}a grows from $+10\%$ at $B{=}1$ to $+237\%$ at $B{=}32$. In the batched harness of Appendix~\ref{app:serving}, the best static width is therefore a fixed wide tree up to $B{=}8$ and the plain chain from $B{=}16$. Global best-first allocation from Section~\ref{sec:serving-theory} instead splits one shared budget across requests. In offline replay of 32-request batches whose budget averages four nodes for each request, it raises total accepted tokens over a uniform split by $16.8$ to $22.5\%$. Measured on Blackwell with Qwen3-8B and a ragged verifier kernel, it lifts goodput over the best static width by a median $7.0\%$ at $B{=}16$ and $13.7\%$ at $B{=}32$ across four runs.

\begin{figure}[t]
\centering
\includegraphics[width=\linewidth]{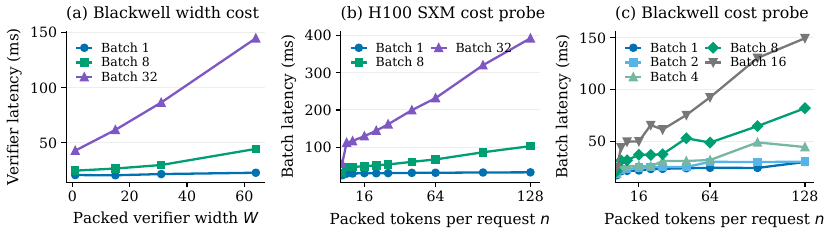}
\caption{Qwen3-8B verification cost under load. (a) Blackwell verifier latency over the packed width $W$ of each request. (b,c) Batch latency over the packed tokens $n$ of each request on H100 SXM and Blackwell.}
\label{fig:serving_cost}
\end{figure}

\section{Related Work}
\label{sec:related}
SMART applies a marginal benefit--cost rule to existing tree decoders \citep{wang2026smart}. For autoregressive drafters, OPT-Tree maximizes a draft-probability proxy for expected acceptance and stops deepening once the gain falls below a threshold tied to the cost of a draft step \citep{wang2025opt}. A one-pass drafter adds no draft step within its block, and our threshold instead prices measured verification time. Our evaluation examines deployment-specific width prediction and shared-budget allocation across requests.

A related line is draft trees over one-pass parallel logits. DART prunes such logits with an external n-gram continuity signal \citep{arxiv-2601-19278}, answering which candidates to keep under a supplied construction rule. We derive the budget-optimal tree of Theorem~\ref{thm:tree} from the drafter's own product score and ask how wide it should be on the measured hardware.

JetSpec retrains the head into a causal-parallel tree drafter \citep{arxiv-2606-18394}, while Weaver adds an autoregressive adapter over the marginal logits \citep{arxiv-2607-06763}. These approaches address conditional dependence. \method instead retains the checkpoint and prices verification. ECHO shares depth and width across requests with AR drafters \citep{arxiv-2604-09603}. For a one-pass drafter, width adds no draft call, leaving verification as the shared cost.

D-cut \citep{arxiv-2607-14647} prunes the verification depth of each request's chain under a profiled latency table, while our allocation splits prefix-closed tree width by best-first mass at measured goodput.

Beyond speculative decoding, draft-agreement routing allocates thinking budgets using agreement between inexpensive drafts \citep{lee2026dart_routing}, and adaptive correct-only rewards reduce reasoning length through efficiency-aware training \citep{lee2026acoer}. These approaches control the amount of reasoning, while \method controls verification work without changing the target output distribution.

Appendix~\ref{app:related} reviews tree verification, adaptive AR trees, and drafter alignment. Our sampled verifier follows proposal-aware verification and traversal \citep{sun2024block,zhou2026hsd,weng2025traversal,sun2023spectr,arxiv-2502-18779}. Existing controllers adapt length, block size, or shape from predicted signals \citep{arxiv-2405-19715,arxiv-2604-05417,arxiv-2505-15141,arxiv-2606-31315,arxiv-2607-19223}. Our width rule uses measured verification cost.

\subsubsection*{Ethics Statement}
\method changes inference scheduling while preserving target weights and the greedy rule. We use public checkpoints and benchmark prompts and collect no new data. Deployments retain the base model's safety profile and should keep its safeguards, because faster inference can reduce harmful generation costs.

\bibliographystyle{iclr2027_conference}
\bibliography{refs}

\appendix
\setlength{\belowcaptionskip}{5pt}
\label{app:conclusion_limitation}

\section{Conclusion and Limitations}

\subsection{Conclusion}
\label{app:conclusion}
We present \method, which recovers a prefix-closed candidate tree from one already-computed block and selects its width from measured verifier costs. Without sweeping over widths, the stopping rule predicts operating widths across three GPU generations. The predicted width ranges from 64 packed tokens for Qwen3-8B on Blackwell, where verification cost jumps at a kernel boundary, to 128 on H100 SXM, where it stays nearly flat. At the predicted width, the tree is faster than the standard chain in all eight settings. It also lengthens accepted rounds for a second one-pass drafter and extends to shared-budget allocation under load. Because the rule needs only offline drafter scores and a short latency probe, a deployment can recompute its width whenever the GPU or target changes. 

\subsection{Limitations}
\label{app:limit}
\paragraph{Evaluation scope.}
The claim is a controlled comparison that changes only the decoder within each setting, not a cross-system ranking. Our decoding experiments use dense targets of up to 8B parameters and, in Appendix~\ref{app:coder}, one 30B mixture-of-experts target with 3B active parameters.

\paragraph{Serving and timing scope.}
Our timing is the decode-only steady state defined in Appendix~\ref{app:more} and is not a serving throughput or cost claim.

\paragraph{Tree and drafter scope.}
The tree is optimal only inside the $K{=}8$ retained universe for the plug-in belief, and because the DFlash marginals are not path-conditioned, deeper branches stay exposed to calibration error. The free-width argument needs a one-pass drafter, since multi-step refinement would reintroduce a draft-side width cost. \method inherits the DFlash assumptions of a trained head and the feature-cache contract and does not address DFlash training cost or block-size changes.

\section{Proofs}
\label{app:proofs}

This appendix proves the formal results of \S\ref{sec:setup} and \S\ref{sec:method} in four steps.
\begin{itemize}
\item Appendix~\ref{app:proofs-exact} states the one-pass drafter interface and shows that one ancestor-masked pass reproduces separate autoregressive forwards, which gives exactness under greedy and sampled verification.
\item Appendix~\ref{app:proofs-tree} writes expected accepted length as a sum of node masses and proves that the top-$N$ tree is optimal, together with its plug-in corollary and regret bound.
\item Appendix~\ref{app:proofs-width} proves the marginal stopping rule of Theorem~\ref{thm:width}.
\item Appendix~\ref{app:proofs-serve} extends both results to a verifier budget shared by $B$ requests.
\end{itemize}

\subsection{Drafter interface and exact verification}
\label{app:proofs-exact}

\begin{assumption}[One-pass marginal ranking interface]
\label{asm:indep}
Conditioned on the cached prefix $x$ and pending root token $b$, a one-pass block drafter returns one marginal proposal distribution $q_j(\cdot\mid x,b)$ for each future position $j=1,\dots,L$ in the same forward pass.
These logits are computed before any within-block token value is sampled or committed.
The tree builder therefore ranks candidate prefixes by the plug-in score of \eqref{eq:plugin}.
\end{assumption}

\begin{lemma}[Packed-logit equivalence]
\label{lem:mask}
Pack the pending root $b$ and the nonroot nodes of the prefix-closed tree into one target input after the cache-backed prefix $x$, with the nodes in any ancestor-before-descendant order and each token placed at its depth-position. Run the target with an ancestor-only attention mask, under which the row of a node $v$ on path $y_{1:d}$ attends to exactly the prefix $x\circ b$, the ancestor tokens $y_{1:d-1}$ on its root-to-$v$ path, and $v$ itself, each at its absolute position. Then, in exact arithmetic, the logits read off the row of $v$ are identical to those of a standalone autoregressive forward on $x\circ b\circ y_{1:d}$. One packed pass therefore reproduces what $N{+}1$ separate forwards over the root and nonroot nodes would compute.
\end{lemma}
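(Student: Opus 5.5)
The plan is to argue layer by layer through the transformer, by induction on depth. The key observation is that every position-wise computation is row-local: embedding, layer norm, MLP, the final projection, and the query/key/value projections at each layer. The only operation that mixes rows is attention. Throughout, fix a node $v$ on path $y_{1:d}$ and compare two inputs. The packed input is $x\circ b$ followed by the tree nodes in ancestor-before-descendant order. The standalone input is $x\circ b\circ y_{1:d}$.

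Write $P(v)=x\circ b\circ y_{1:d}$ for the set of tokens that row $v$ is allowed to see. The invariant to prove, for every layer $\ell$ and every packed row $u$ with $u\in P(v)$, is that the hidden state of $u$ at layer $\ell$ in the packed pass equals the hidden state at the same token in the standalone forward. For $u$ in $x\circ b$ this is read from the cache, or computed identically.

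\textbf{Base case.} At the embedding layer, the state of a row depends only on its token and its position id. We set each node's position id to its absolute depth-position $|x|+1+d'$, so the embedding and rotary phase match the standalone forward.

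\textbf{Inductive step.} Assume the invariant at layer $\ell$. The attention output of row $u$ at layer $\ell+1$ is a softmax-weighted sum over the rows its mask admits. For $u=y_{1:d'}$ these are exactly $x\circ b\circ y_{1:d'}$. This is the same set of tokens that causal attention admits in the standalone forward at $u$'s position. The reason is prefix-closedness: it guarantees that every ancestor $y_{1:d''}$, $d''<d'$, is present in the tree, and the packing order guarantees it occupies some row.

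By the inductive hypothesis, the keys and values of the admitted rows coincide with the standalone ones. Because position enters only through the assigned position ids, and not through row order, the queries coincide as well, and so do the relative rotary offsets. The softmax over masked-out rows contributes zero weight, so the attention outputs agree. The subsequent row-local operations then preserve equality. Applying this at the last layer to $u=v$ gives identical logits.

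Repeating the argument for each node, and for the root $b$ (whose row sees only $x\circ b$), yields $N{+}1$ logit vectors equal to those of $N{+}1$ separate forwards.

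The main obstacle is the bookkeeping in the attention step. One must show that the mask, the cache, and the position ids together make the admitted key set literally the standalone causal context. In particular, sibling nodes share a position id without interfering, and the order of rows within the packed block is irrelevant because attention is permutation-invariant over its key set given fixed positions. I would state this permutation-invariance explicitly as a one-line sub-claim before the induction. I would also note that "exact arithmetic" is needed only because reduction order over keys differs between the two computations.
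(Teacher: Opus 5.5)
Your proposal is correct and follows the paper's argument: the ancestor-only mask and the absolute depth-position ids give each row the same query and the same attended key/value set as the standalone causal forward, so equality propagates layer by layer while siblings and off-path nodes get zero attention weight. Your invariant over all rows of $x\circ b\circ y_{1:d}$ makes explicit something the paper's phrase ``layer by layer'' leaves implicit: this set is closed under the attention relation, so the ancestors' intermediate keys and values match too, and that is what lets the induction go through.
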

\begin{proof}
Under the ancestor-only mask the query at $v$'s row attends to exactly the rows of $x\circ b$, the ancestors $y_{1:d-1}$, and $v$, and each of these tokens sits at the same absolute position, hence the same positional (RoPE) phase, that it occupies in the standalone sequence $x\circ b\circ y_{1:d}$. Every attention and feed-forward sublayer therefore receives at $v$'s row the identical query and the identical attended key/value set as in the standalone forward and produces the identical hidden state layer by layer. Siblings and off-path nodes are masked and cannot influence $v$'s row. 
\end{proof}

\begin{theorem}[Distribution preservation under sampling]
\label{thm:stoch}
Fix a temperature $T>0$ and the baseline logit processors. At a verified node $v$ with prefix $x\circ b\circ y_{1:d}$, let $p_v$ be the target distribution at that prefix after temperature scaling and the fixed baseline logit processors, and let $q_v$ be any proposal positive on the retained siblings $\mathcal{C}_v$. The recursive speculative-sampling rule of \S\ref{sec:setup} starts from the residual distribution $\mu=p_v$. At each step it draws a candidate $c$ from the available-renormalized proposal $\tilde q$ and accepts it with probability
\[
\min\Big(1,\frac{\mu(c)}{\tilde q(c)}\Big).
\]
On rejection it updates the residual to
\[
\mu\;\leftarrow\;\frac{(\mu-\tilde q)_+}{\lVert(\mu-\tilde q)_+\rVert_1}
\]
and continues, and if all retained siblings are rejected it commits a draw from the final residual. Then, in exact arithmetic, the committed token is distributed exactly as $p_v$. By the packed-logit equivalence of Lemma~\ref{lem:mask} and the tower property over visited nodes, the committed-token sequence has the same law as autoregressive sampling from the target at temperature $T$.
\end{theorem}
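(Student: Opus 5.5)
The proof has two parts. The first is a single-node claim: the recursive rule commits a token distributed exactly as $p_v$. The second lifts this to whole sequences using Lemma~\ref{lem:mask} and a tower argument over the visited path.

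For the single-node claim I would induct on the number of retained siblings still available. The invariant I would carry is the following. Whenever the procedure is at a step with current residual $\mu$ and available set $\mathcal{C}$, with the previously rejected candidates removed, the token it eventually commits has law $\mu$.

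The base case is $\mathcal{C}=\emptyset$. The rule then commits a draw from $\mu$ directly, so the invariant holds.

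For the inductive step, let $\tilde q$ be the proposal renormalized over $\mathcal{C}$. A candidate $c$ is drawn and accepted, so the probability of committing $t$ at this step is $\tilde q(t)\min(1,\mu(t)/\tilde q(t))=\min(\mu(t),\tilde q(t))$. Summing over $t$ gives the total rejection probability
\[
1-\sum_t \min(\mu(t),\tilde q(t))=\lVert(\mu-\tilde q)_+\rVert_1 .
\]
On rejection the residual becomes $\mu'=(\mu-\tilde q)_+/\lVert(\mu-\tilde q)_+\rVert_1$ and one candidate leaves $\mathcal{C}$. By the induction hypothesis the eventual commit then has law $\mu'$. Combining the two branches, the commit probability of $t$ is
\[
\min(\mu(t),\tilde q(t))+(\mu(t)-\tilde q(t))_+=\mu(t).
\]

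Three degenerate cases need care. If the norm $\lVert(\mu-\tilde q)_+\rVert_1$ is zero, then $\mu=\tilde q$, rejection has probability zero, and the update is never applied. Positivity of $q_v$ on $\mathcal{C}_v$ ensures $\tilde q$ is well defined at every step. The main subtlety is sampling without replacement: the rejected candidate $c$ is removed, and we must check that this is consistent. After rejecting $c$, the proposal had $\tilde q(c)>0$ and rejection of $c$ implies $\mu(c)<\tilde q(c)$, so $\mu'(c)=0$. Removing $c$ therefore loses no residual mass, and the accept ratio for the remaining candidates is unaffected. Starting the invariant from $\mu=p_v$ gives the first claim.

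For the sequence claim, Lemma~\ref{lem:mask} shows that the distribution $p_v$ read off the packed row of each node equals the standalone autoregressive next-token distribution at $x\circ b\circ y_{1:d}$ after the same temperature and logit processors. Verification walks from the root. At each visited node it commits a token $t\sim p_v$. If $t$ is a retained child, it descends into that child and repeats the node rule with fresh randomness. Otherwise it stops and $t$ becomes the next pending root.

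I would then show by induction on the number of committed tokens $k$ that the joint law of the first $k$ committed tokens equals the autoregressive law. The conditional law of the $(k{+}1)$-th token given the first $k$ is $p$ at the current prefix. This holds both when the walk is still inside the tree (by the node rule) and when the round has ended and a new round starts from the new pending root, since the new round depends only on the committed prefix and fresh randomness. Chaining these conditionals by the tower property gives the full sequence law, including across round boundaries. Note that the drafter's proposals may depend on the committed history; this is harmless because the node claim holds for any positive $q_v$.

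The step I expect to be the main obstacle is the without-replacement bookkeeping inside the node induction: verifying that the removed sibling carries zero residual mass, and that the renormalized $\tilde q$ does not break the sum identity. I would handle it by stating the invariant explicitly in terms of $(\mu,\mathcal{C})$ as above.
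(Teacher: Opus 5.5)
Your proposal is correct and follows the paper's proof: the one-step split $\min(\mu,\tilde q)+(\mu-\tilde q)_+=\mu$, induction over the without-replacement schedule closed by the final-residual draw, and then Lemma~\ref{lem:mask} with the tower property over visited nodes and rounds. You are more explicit than the paper about the invariant, which you state for arbitrary $(\mu,\mathcal{C})$, and about the degenerate cases; because that invariant covers arbitrary pairs, the fact that a rejected sibling carries zero residual mass, which you flag as the main obstacle, is not needed for the distributional claim and only guarantees that the final residual draw never lands on a retained sibling.
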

\begin{proof}
This is recursive speculative sampling without replacement \citep{miao2024specinfer,sun2023spectr}. Consider one step with residual $\mu$ and available-renormalized proposal $\tilde q$. The candidate $c$ is drawn with probability $\tilde q(c)$ and accepted with probability $\min(1,\mu(c)/\tilde q(c))$. The mass committed to $c$ at this step and the mass routed onward on rejection are therefore
\[
\tilde q(c)\min\Big(1,\frac{\mu(c)}{\tilde q(c)}\Big)=\min\big(\mu(c),\tilde q(c)\big)
\qquad\text{and}\qquad
\mu-\min(\mu,\tilde q)=(\mu-\tilde q)_+ .
\]
The normalized onward mass is exactly the conditional-on-rejection correction law. Summing the accept and reject branches telescopes the committed-token law to $\mu$, which equals $p_v$ at the first step. Induction over the without-replacement schedule preserves it, and the final-residual draw closes the recursion when all retained siblings are exhausted. By Lemma~\ref{lem:mask}, the packed verify pass reproduces the logits that $N{+}1$ separate autoregressive forwards would compute, and $p_v$ is the true target law at each visited prefix. The tower property over committed nodes yields the sequence-level claim.
\end{proof}

\subsection{Optimal candidate trees}
\label{app:proofs-tree}

\begin{lemma}[Node-sum decomposition]
\label{lem:nodesum}
For any prefix-closed $S$, under any continuation law with prefix masses $\rho$,
\[
\mathbb{E}_{\rho}[A(S)] \;=\; \sum_{v\in S}\rho(v).
\]
\end{lemma}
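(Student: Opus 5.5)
The plan is the tail-sum formula for a nonnegative integer random variable, combined with prefix-closedness of $S$ to identify each tail event with membership of a node in $S$. Let $Y$ be a random continuation whose law has prefix masses $\rho$, so that $\Pr[Y_{1:|v|}=v]=\rho(v)$. Since $A(S)\in\{0,1,\dots,L\}$, we write
\[
\mathbb{E}_{\rho}[A(S)]=\sum_{k=1}^{L}\Pr_{\rho}[A(S)\ge k].
\]
The work is then to rewrite each event $\{A(S)\ge k\}$ in terms of the nodes of $S$.

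The key step is showing that, for prefix-closed $S$, $\{A(S)\ge k\}=\{Y_{1:k}\in S\}$. For $(\Leftarrow)$: if $Y_{1:k}\in S$, then by the definition $A(S)=\max\{j:Y_{1:j}\in S\}$ we get $A(S)\ge k$. For $(\Rightarrow)$: if $A(S)=m\ge k$, then $Y_{1:m}\in S$. Prefix-closedness puts every ancestor of $Y_{1:m}$ in $S$, in particular $Y_{1:k}$. Without prefix-closedness only $(\Leftarrow)$ holds, so this is exactly where the hypothesis enters.

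Next we decompose by depth. Write $S_k=\{v\in S:|v|=k\}$. The events $\{Y_{1:k}=v\}$ for $v\in S_k$ are disjoint, because distinct strings of equal length cannot both equal $Y_{1:k}$. Hence
\[
\Pr_{\rho}[Y_{1:k}\in S]=\sum_{v\in S_k}\rho(v).
\]
Summing over $k=1,\dots,L$ gives $\sum_{v\in S}\rho(v)$, since $S$ is the disjoint union of the $S_k$ and has depth at most $L$.

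I expect no real obstacle; the only delicate point is the set identity in the key step. I would also remark that the same computation goes through if $\rho$ is merely a prefix belief, i.e.\ any law with those prefix masses, which is all that is needed. Monotonicity of $\rho$ is not used here; it matters only later, in Theorem~\ref{thm:tree}.
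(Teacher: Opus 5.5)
Your proposal is correct and follows the paper's own proof: the tail-sum formula, the identity $\{A(S)\ge k\}=\{Y_{1:k}\in S\}$ from prefix-closedness, and a depth-by-depth sum of the disjoint events $\{Y_{1:k}=v\}$. You check both inclusions of that identity explicitly, whereas the paper only asserts it, and you are right that monotonicity of $\rho$ is not needed for this lemma.
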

\begin{proof}
Because $S$ is prefix-closed, $A(S)\ge k$ holds if and only if $Y_{1:k}\in S$. Summing tail probabilities then gives
\begin{align*}
\mathbb{E}_{\rho}[A(S)]
&=\sum_{k\ge1}\Pr_{\rho}\big[A(S)\ge k\big]
&&\text{tail-sum formula}\\
&=\sum_{k\ge1}\Pr_{\rho}\big[Y_{1:k}\in S\big]
&&\text{prefix closure of } S\\
&=\sum_{k\ge1}\,\sum_{v\in S\cap\Sigma^k}\rho(v)
&&\text{definition of } \rho\\
&=\sum_{v\in S}\rho(v). \qedhere
\end{align*}
\end{proof}

\begin{proof}[Proof of Theorem~\ref{thm:tree}]
Theorem~\ref{thm:tree} states that the $N$ strings of largest mass form a prefix-closed tree and that this tree maximizes expected accepted length. The proof shows the two parts in turn.

\emph{The top-$N$ set is prefix-closed.}
For any strings $v, vw$, the event $\{Y_{1:|vw|}=vw\}$ is contained in $\{Y_{1:|v|}=v\}$, and hence $\rho(vw)\le\rho(v)$.
Order all strings in $\mathcal{U}$ by $\rho$ descending, breaking ties in favor of shorter strings and then lexicographically.
If $v$ belongs to the first $N$ elements and $u$ is a prefix of $v$, then $u\in\mathcal{U}$ because $\mathcal{U}$ is prefix-closed.
By monotonicity, $\rho(u)\ge\rho(v)$. If equality holds, the tie-break places $u$ before $v$.
Thus every ancestor of $v$ also belongs to the first $N$ elements, and the top-$N$ set is prefix-closed.

\emph{The top-$N$ set is optimal.}
Write $\rho_{(i)}$ for the $i$-th largest mass in $\mathcal{U}$. By Lemma~\ref{lem:nodesum}, every prefix-closed $N$-element subset $S\subseteq\mathcal{U}$ satisfies
\[
\mathbb{E}_{\rho}[A(S)]=\sum_{v\in S}\rho(v)\;\le\;\sum_{i=1}^{N}\rho_{(i)},
\]
with equality for the first $N$ elements of the order above.
The previous paragraph shows that these first $N$ elements are themselves prefix-closed, so they attain the bound and are optimal among prefix-closed candidate trees.
If unrelated prefixes tie at the boundary, other optimal trees may also exist. The stated tie-break selects one prefix-closed optimum.
\end{proof}

\begin{corollary}[DFlash plug-in tree]
\label{cor:plugin-tree}
Under Assumption~\ref{asm:indep}, setting $\rho=\hat\pi$ from \eqref{eq:plugin} gives the exact top-$N$ tree for that plug-in objective inside the chosen feasible universe. Under a rank cap the guarantee is exact for the retained plug-in objective over the retained universe.
\end{corollary}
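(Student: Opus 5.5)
The corollary is a direct instantiation of Theorem~\ref{thm:tree}, so the plan is to check that the plug-in score $\hat\pi$ of \eqref{eq:plugin} meets that theorem's hypotheses on the relevant universe, and then read off the conclusion.

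\emph{Step 1: choose the universe.} Take $\mathcal{U}$ to be the chosen feasible universe. Without a rank cap, this is all strings $y_{1:d}$ with $1\le d\le L$ whose tokens have positive drafter probability. Under a rank cap $K$, it is all strings with $y_j$ among the top-$K$ tokens of $q_j(\cdot\mid x,b)$ for every $j\le d$. Both sets are prefix-closed: truncating a string preserves the per-position membership conditions, because each condition involves only its own position $j$. Both sets are also finite, since $d\le L$ and the vocabulary, or the retained set at each position, is finite. By Assumption~\ref{asm:indep}, all $q_j$ come from one pass conditioned only on $(x,b)$. So the universe and the score are fixed before any tree is built, as the \emph{fixed}-belief theorem requires.

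\emph{Step 2: verify monotonicity of $\rho=\hat\pi$.} For a string $v=y_{1:d}$ and a descendant $vw=y_{1:d'}$ with $d'>d$, factor the product as
\[
\hat\pi(vw)=\hat\pi(v)\prod_{d<j\le d'} q_j(y_j\mid x,b).
\]
Each factor lies in $[0,1]$, so $\hat\pi(vw)\le\hat\pi(v)$. This is the only property of $\rho$ that the proof of Theorem~\ref{thm:tree} uses, apart from the node-sum identity.

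The one subtle point is that the prefix-closure argument in that proof derived $\rho(vw)\le\rho(v)$ from event inclusion. Here the inequality is supplied directly by the factorization instead. I would also check that Lemma~\ref{lem:nodesum} applies. It does, because $\hat\pi$ is exactly the prefix-mass function of the product law $\bigotimes_j q_j$ on $\Sigma^L$. That law has $\Pr[Y_{1:d}=y_{1:d}]=\prod_{j\le d}q_j(y_j)$, so $\mathbb{E}_{\hat\pi}$ is the expectation under a genuine law. No assumption on the target is needed, consistent with the remark that Assumption~\ref{asm:indep} describes only the drafter.

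\emph{Step 3: conclude.} Theorem~\ref{thm:tree} now gives the result: the $N$ strings of $\mathcal{U}$ with largest $\hat\pi$, with ties broken toward prefixes, form a prefix-closed tree that maximizes $\sum_{v\in S}\hat\pi(v)=\mathbb{E}_{\hat\pi}[A(S)]$. For the rank-cap clause, apply the same argument with the capped universe, and with $\hat\pi$ restricted to it.

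I expect no real obstacle. The main care is the rank-cap case. The restricted objective is not renormalized, yet the theorem never needs $\rho$ to be a probability law over $\mathcal{U}$, only monotone prefix masses. The guarantee is therefore exact for the retained objective, and it makes no claim about the full universe.
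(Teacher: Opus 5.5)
Your proposal is correct and follows the paper's route. The paper's proof simply applies Theorem~\ref{thm:tree} with $\rho=\hat\pi$ on the chosen universe, or on the top-$K$-capped universe of strings built from retained tokens, and notes that omitted tokens fall outside the exactness claim. Your extra checks supply hypotheses the paper leaves implicit: prefix-closure and finiteness of the universe, monotonicity of $\hat\pi$ via the factorization, and realizing $\hat\pi$ as the prefix masses of the product law $\bigotimes_j q_j$ so that Lemma~\ref{lem:nodesum} applies.
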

\begin{proof}
Applying Theorem~\ref{thm:tree} with $\rho=\hat\pi$ gives the exact top-$N$ tree for that belief inside that universe.
When the implementation first retains only top-$K$ tokens at each future position, the universe is the prefix-closed set of strings composed from those retained tokens. Omitted tokens are outside the feasible family, not covered by the exactness claim.
\end{proof}

\begin{proposition}[Plug-in regret]
\label{prop:regret}
Fix one round, one retained prefix universe $\mathcal{U}$, and one budget $N$.
Let $\mathcal{F}_N(\mathcal{U})$ be the prefix-closed subsets of $\mathcal{U}$ with size $N$.
Let $S^\ast$ maximize $\sum_{v\in S}\pi^\star(v)$ over $\mathcal{F}_N(\mathcal{U})$, and let $\hat S$ maximize $\sum_{v\in S}\hat\pi(v)$ over the same family.
Then the gap between the retained-universe oracle tree and the plug-in tree, evaluated under the round's target prefix law, satisfies
\[
\sum_{v\in S^\ast}\pi^\star(v)-\sum_{v\in \hat S}\pi^\star(v)\;\le\;\sum_{v\in S^\ast\,\triangle\,\hat S}\bigl|\pi^\star(v)-\hat\pi(v)\bigr|,
\]
a sum of prefix-belief estimation errors over the symmetric difference of the two trees.
\end{proposition}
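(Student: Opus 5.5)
The argument is the standard ``optimizer regret'' decomposition, using only that $\hat S$ is optimal for the plug-in objective. The objective is linear in node masses: $\sum_{v\in S}\pi^\star(v)$ is a sum of per-node values. So I would write $F^\star(S)=\sum_{v\in S}\pi^\star(v)$ and $\hat F(S)=\sum_{v\in S}\hat\pi(v)$. Here $S^\ast$ maximizes $F^\star$ and $\hat S$ maximizes $\hat F$ over the same family $\mathcal{F}_N(\mathcal{U})$. Prefix-closure plays no further role, except that $S^\ast$ is feasible for the plug-in problem.

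First I insert and subtract the plug-in objective:
\[
F^\star(S^\ast)-F^\star(\hat S)=\bigl[F^\star(S^\ast)-\hat F(S^\ast)\bigr]+\bigl[\hat F(S^\ast)-\hat F(\hat S)\bigr]+\bigl[\hat F(\hat S)-F^\star(\hat S)\bigr].
\]
Because $S^\ast\in\mathcal{F}_N(\mathcal{U})$ and $\hat S$ maximizes $\hat F$ there, the middle bracket is $\le 0$. That leaves
\[
\sum_{v\in S^\ast}\bigl(\pi^\star(v)-\hat\pi(v)\bigr)-\sum_{v\in\hat S}\bigl(\pi^\star(v)-\hat\pi(v)\bigr).
\]

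Next I cancel the common nodes. Write $e(v)=\pi^\star(v)-\hat\pi(v)$. Every $v\in S^\ast\cap\hat S$ contributes $e(v)$ to the first sum and $-e(v)$ to the second, so these terms cancel. The remainder is
\[
\sum_{v\in S^\ast\setminus\hat S}e(v)-\sum_{v\in\hat S\setminus S^\ast}e(v).
\]
Bounding each term by its absolute value gives $\sum_{v\in S^\ast\triangle\hat S}|e(v)|$, which is the claim.

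There is no real obstacle. The only points needing care are that both optimizers range over the identical feasible family, so the middle bracket's sign is justified, and that the cancellation is exact on the intersection. The second point holds because both trees have size $N$ and the objective is additive over nodes, which is the content of Lemma~\ref{lem:nodesum}. The left side also equals the gap in expected accepted length under $\pi^\star$, by Lemma~\ref{lem:nodesum} applied with $\rho=\pi^\star$.
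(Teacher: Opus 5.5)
Your proposal is correct and matches the paper's proof step for step: add and subtract the plug-in scores, drop the nonpositive bracket using $\hat\pi$-optimality of $\hat S$ over the shared family, cancel the terms on $S^\ast\cap\hat S$, and bound the remaining terms by their absolute values. One minor remark: the cancellation on the intersection holds simply because both objectives are defined as node sums, so it needs neither equal tree sizes nor Lemma~\ref{lem:nodesum}; the lemma matters only for reading the left side as a gap in expected accepted length.
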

\begin{proof}
Write $\Delta(v)=\pi^\star(v)-\hat\pi(v)$ for the estimation error at node $v$. Adding and subtracting the plug-in scores gives
\begin{align*}
\sum_{v\in S^\ast}\pi^\star(v)-\sum_{v\in \hat S}\pi^\star(v)
&=\underbrace{\Big(\sum_{v\in S^\ast}\hat\pi(v)-\sum_{v\in \hat S}\hat\pi(v)\Big)}_{\le\,0\ \text{because $\hat S$ is $\hat\pi$-optimal}}
+\sum_{v\in S^\ast}\Delta(v)-\sum_{v\in \hat S}\Delta(v)\\
&\le\sum_{v\in S^\ast\setminus\hat S}\Delta(v)-\sum_{v\in \hat S\setminus S^\ast}\Delta(v)\\
&\le\sum_{v\in S^\ast\,\triangle\,\hat S}\bigl|\Delta(v)\bigr|.
\end{align*}
The second line drops the nonpositive bracket and cancels the terms of $v\in S^\ast\cap\hat S$, which appear in both sums. The last line bounds each remaining term by its absolute value.
\end{proof}

\subsection{The marginal stopping rule}
\label{app:proofs-width}

\begin{proof}[Proof of Theorem~\ref{thm:width}]
Theorem~\ref{thm:width} states that throughput is unimodal in the budget and that adding the next node does not reduce it if and only if the node's mass is at least throughput times the cost slope. The proof has three steps.

\emph{Concavity of $g$.}
By Theorem~\ref{thm:tree} and Lemma~\ref{lem:nodesum}, the optimal budget-$N$ tree collects the $N$ largest masses, and therefore
\[
g(N)=\sum_{i\le N}\rho_{(i)},\qquad g(N)-g(N{-}1)=\rho_{(N)} .
\]
These increments are nonincreasing, and $g$ is concave on integer budgets. Set $g(0)=0$ and extend $g$ to $[0,\infty)$ by linear interpolation. The extension is concave and nondecreasing.

\emph{Unimodality of throughput.}
Let $\nu$ denote a continuous relaxation of the nonroot node budget.
In the operating range, the theorem assumes that the non-drafter part of the round time, $h(\nu)+\ell_{\mathrm{tgt}}(n_0+\nu)$, is locally affine with total slope $c_1\ge0$.
Equivalently, after absorbing $c_{\mathrm{draft}}$, the root-token cost, and the local intercept into a constant $\ell_0>0$, write
\[
\ell(\nu)=\ell_0+c_1 \nu .
\]
Thus $\ell(\nu)$ is positive and nondecreasing, and $\ell(N{+}1)=\ell(N)+c_1$.
Let $f(\nu)=(1+g(\nu))/\ell(\nu)$ denote throughput. Because $\ell(\nu)>0$, every level $\lambda$ satisfies
\[
f(\nu)\ge\lambda\iff 1+g(\nu)-\lambda\,\ell(\nu)\ge0 .
\]
The right-hand side is a concave function of $\nu$, whose nonnegativity region is an interval. Every superlevel set of $f$ is therefore an interval, and $f$ is quasiconcave.
A quasiconcave function restricted to integers is unimodal, because its superlevel sets restrict to integer intervals.

\emph{Stopping condition.}
Substituting $g(N{+}1)=g(N)+\rho_{(N+1)}$ and $\ell(N{+}1)=\ell(N)+c_1$ and multiplying both sides by the positive product $\ell(N)\,\ell(N{+}1)$ gives
\begin{align*}
f(N{+}1)\ge f(N)
&\iff \big(1+g(N)+\rho_{(N+1)}\big)\,\ell(N)\;\ge\;\big(1+g(N)\big)\big(\ell(N)+c_1\big)\\
&\iff \rho_{(N+1)}\,\ell(N)\;\ge\;\big(1+g(N)\big)\,c_1\\
&\iff \rho_{(N+1)}\;\ge\;f(N)\,c_1 .
\end{align*}
Adding node $N{+}1$ therefore does not reduce throughput if and only if this condition holds. By unimodality, the first budget at which it fails is a maximizer, with the smallest and largest allowed budgets as boundary cases.
\end{proof}

\subsection{Shared budget under load}
\label{app:proofs-serve}

\begin{proposition}[Shared-budget allocation is global best-first]
\label{prop:alloc}
Consider one batched round with requests $r=1,\dots,B$, each carrying a fixed finite prefix-closed candidate universe $\mathcal{U}_r$ with prefix masses $\rho_r$ that are nonincreasing from ancestors to descendants (for the one-pass interface, $\rho_r=\hat\pi_r$). Over prefix-closed selections $S_r\subseteq\mathcal{U}_r$ with shared budget $\sum_r|S_r|=M$, the expected total accepted length satisfies
\[
\mathbb{E}\Big[\sum_{r=1}^{B} A(S_r)\Big]=\sum_{r=1}^{B}\,\sum_{v\in S_r}\rho_r(v),
\]
and it is maximized by selecting the $M$ nodes of largest mass in the disjoint union $\mathcal{U}=\bigsqcup_r\mathcal{U}_r$ (ties toward prefixes). The maximizer is prefix-closed inside every request. Global best-first expansion across the batch in decreasing $\rho$ order is optimal for the shared budget.
\end{proposition}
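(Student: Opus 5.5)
The plan is to reduce Proposition~\ref{prop:alloc} to Theorem~\ref{thm:tree} by viewing the batch as a single forest.

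\emph{Step 1: additivity.} By linearity of expectation, the expected total accepted length is $\sum_r \mathbb{E}[A(S_r)]$. Each $S_r$ is prefix-closed inside $\mathcal{U}_r$, so Lemma~\ref{lem:nodesum} applies request by request and gives $\mathbb{E}[A(S_r)]=\sum_{v\in S_r}\rho_r(v)$. Summing over $r$ yields the displayed identity. No independence across requests is needed, only linearity.

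\emph{Step 2: upper bound.} Form the disjoint union $\mathcal{U}=\bigsqcup_r\mathcal{U}_r$. Tag each node with its request index and set $\rho(v)=\rho_r(v)$ for $v\in\mathcal{U}_r$. Any feasible allocation $(S_r)_r$ corresponds to a subset $S=\bigsqcup_r S_r\subseteq\mathcal{U}$ with $|S|=M$. Its objective value $\sum_{v\in S}\rho(v)$ is therefore at most $\sum_{i\le M}\rho_{(i)}$, the sum of the $M$ largest masses in $\mathcal{U}$.

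\emph{Step 3: attainability.} Order $\mathcal{U}$ by $\rho$ descending. Break ties first toward shorter strings, then by request index, then lexicographically. Let $S^\dagger$ be the first $M$ elements. Fix $v\in S^\dagger\cap\mathcal{U}_r$ and a prefix $u$ of $v$.
\begin{itemize}
\item Prefix closure of $\mathcal{U}_r$ gives $u\in\mathcal{U}_r$.
\item Monotonicity of $\rho_r$ gives $\rho_r(u)\ge\rho_r(v)$.
\item If equality holds, the length tie-break places $u$ before $v$.
\end{itemize}
Hence $u\in S^\dagger$, so $S^\dagger_r=S^\dagger\cap\mathcal{U}_r$ is prefix-closed for every $r$. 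This is the same argument as in the proof of Theorem~\ref{thm:tree}. The point is that ancestry relations never cross requests, so each ancestor comparison happens inside a single $\mathcal{U}_r$. The allocation $(S^\dagger_r)$ is feasible and attains the bound of Step 2, so it is optimal. Unequal widths $|S^\dagger_r|$ are allowed.

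\emph{Step 4: best-first expansion.} It remains to show that global best-first expansion produces $S^\dagger$. Take a max-heap over the frontier, meaning the children of selected nodes together with the roots' children. When the heap pops its next node, every unselected node of larger mass (under the tie order) has an ancestor on the frontier, and that ancestor has at least that mass. The popped node is therefore the next element of the global order. Induction on the number of pops shows that after $M$ pops the selected set is $S^\dagger$.

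The main obstacle is only bookkeeping. The tie-break must be global and consistent across requests, and the frontier invariant must hold for a forest rather than a tree. Both are handled exactly as in the single-request case. At $B=1$ the statement reduces to Theorem~\ref{thm:tree}.
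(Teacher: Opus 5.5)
Your proof is correct and follows the paper's route: you apply linearity and Lemma~\ref{lem:nodesum} to each request, then pass to the disjoint-union forest, where the top-$M$ argument of Theorem~\ref{thm:tree} goes through because ancestry never crosses requests. The differences are refinements rather than a new route: you re-derive the tie-break argument inline with an explicit request-index tag, which the paper skips when it applies Theorem~\ref{thm:tree} to a union of copies, and you add a frontier-invariant proof that the heap pops nodes in the global order, a step the paper leaves implicit.
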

\begin{proof}
By Lemma~\ref{lem:nodesum} applied to request $r$, $\mathbb{E}[A(S_r)]=\sum_{v\in S_r}\rho_r(v)$ for any prefix-closed $S_r$, so by linearity $\mathbb{E}[\sum_r A(S_r)]=\sum_r\sum_{v\in S_r}\rho_r(v)$. Form the disjoint-union universe $\mathcal{U}=\bigsqcup_r\mathcal{U}_r$ with masses $\rho$ equal to $\rho_r$ on the copy of $\mathcal{U}_r$. Because the copies share no strings, a selection $S=\bigcup_r S_r$ is prefix-closed in $\mathcal{U}$ if and only if each $S_r$ is prefix-closed in $\mathcal{U}_r$, and $|S|=\sum_r|S_r|=M$. Maximizing $\sum_{v\in S}\rho(v)$ over prefix-closed $S\subseteq\mathcal{U}$ with $|S|=M$ is then exactly the setting of Theorem~\ref{thm:tree} on $\mathcal{U}$, whose masses are nonincreasing from ancestors to descendants within each component and which has no cross-component edges. Restricting the maximizer to each component yields the request-level selections.
\end{proof}

\begin{proposition}[Batched stopping rule under load]
\label{prop:serve-stop}
Let $G(M)$ be the sum of the $M$ largest masses in $\mathcal{U}$, which is concave in $M$ because its increments $\rho_{(M)}$ are nonincreasing, and let the round time be
\[
\ell(M,B)=c_{\mathrm{draft}}(B)+h(M,B)+\ell_{\mathrm{tgt}}(B,\,B+M),
\]
where $c_{\mathrm{draft}}(B)$ is the batched drafter cost, independent of $M$, $h(M,B)$ is the overhead at total budget $M$, $\ell_{\mathrm{tgt}}(B,n)$ is the target-forward latency of $B$ requests carrying $n$ packed tokens in total, and $h+\ell_{\mathrm{tgt}}$ is locally affine in $M$ with nonnegative slope $c_1(B)$. Then batch throughput $(G(M)+B)/\ell(M,B)$ is unimodal, and the global node $M{+}1$ is added while its mass satisfies
\[
\rho_{(M+1)}\;\ge\;\frac{G(M)+B}{\ell(M,B)}\,c_1(B).
\]
The statement recovers Theorem~\ref{thm:width} at $B{=}1$.
\end{proposition}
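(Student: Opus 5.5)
The plan is to reduce Proposition~\ref{prop:serve-stop} to the single-request argument of Theorem~\ref{thm:width}, with the batched quantities substituted in. The first step is to identify $G$. By Proposition~\ref{prop:alloc}, the optimal shared-budget selection at total budget $M$ consists of the $M$ largest masses in the disjoint union $\mathcal{U}$. Its expected total accepted length is therefore $G(M)=\sum_{i\le M}\rho_{(i)}$, with $\rho_{(i)}$ the order statistics over $\mathcal{U}$. The increments $\rho_{(M)}$ are nonincreasing, so $G$ is concave on integers. Setting $G(0)=0$ and interpolating linearly gives a concave, nondecreasing extension to $[0,\infty)$. The numerator $G(M)+B$ counts one committed target token per request, which generalizes the $+1$ of the batch-one case.

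The second step handles the round time. Hold $B$ fixed. The drafter cost $c_{\mathrm{draft}}(B)$, the $B$ always-verified roots, and the local intercept of $h+\ell_{\mathrm{tgt}}$ all fold into one constant $\ell_0(B)>0$. The affine hypothesis then gives $\ell(\mu,B)=\ell_0(B)+c_1(B)\,\mu$ on the operating range, which is positive and satisfies $\ell(M{+}1,B)=\ell(M,B)+c_1(B)$.

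Unimodality follows by the same superlevel-set argument as before. For any level $\lambda$, the inequality $F(\mu)=(G(\mu)+B)/\ell(\mu,B)\ge\lambda$ holds exactly when $G(\mu)+B-\lambda\,\ell(\mu,B)\ge 0$. The left side is concave in $\mu$, so each superlevel set is an interval. Hence $F$ is quasiconcave, and its restriction to integers is unimodal.

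The stopping condition is the same cross-multiplication as in Theorem~\ref{thm:width}. Substitute $G(M{+}1)=G(M)+\rho_{(M+1)}$ and $\ell(M{+}1,B)=\ell(M,B)+c_1(B)$, then multiply by the positive product $\ell(M,B)\,\ell(M{+}1,B)$. This gives
\[
F(M{+}1)\ge F(M)\iff \rho_{(M+1)}\,\ell(M,B)\ge (G(M)+B)\,c_1(B),
\]
which is the stated rule. By unimodality, the first budget at which the rule fails is a maximizer, with the smallest and largest allowed budgets treated as boundary cases as before. Setting $B=1$ gives $G=g$, $\ell(M,1)=\ell(N)$ with $n_0=1$, and the condition of Theorem~\ref{thm:width}.

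I do not expect a real obstacle here. The one point that needs care is that the node added at step $M{+}1$ must be the global $(M{+}1)$-st largest mass. That node may belong to any request, and adding it must keep every $S_r$ prefix-closed. This is exactly what Proposition~\ref{prop:alloc} guarantees through its tie-break toward prefixes, so the optimal selections are nested in $M$ and the increment $G(M{+}1)-G(M)$ really equals $\rho_{(M+1)}$.
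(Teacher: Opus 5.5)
Your proposal is correct and follows the paper's proof essentially step for step. Like the paper, you identify $G(M)$ as the optimal shared-budget mass via Proposition~\ref{prop:alloc}, absorb the drafter cost and local intercept into $\ell_0(B)>0$, reuse the superlevel-set quasiconcavity argument of Theorem~\ref{thm:width}, and redo the cross-multiplication with $G(M)+B$ in place of $1+g(N)$.

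One small correction to your last paragraph. The increment identity $G(M{+}1)-G(M)=\rho_{(M+1)}$ follows from the definition of $G$ alone. What Proposition~\ref{prop:alloc} actually supplies is that $G(M)$ is attained by a selection that is prefix-closed within every request.
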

\begin{proof}
By Proposition~\ref{prop:alloc} the optimal accepted mass at total budget $M$ is $G(M)=\sum_{i\le M}\rho_{(i)}$, the sum of the $M$ largest masses of $\mathcal{U}$. Its increments $\rho_{(M)}$ are nonincreasing, so $G$ is concave. The committed batch length at budget $M$ is $G(M)+B$, since the $B$ roots are always committed. Writing $\ell(M,B)=\ell_0(B)+c_1(B)\,M$ after absorbing the width-independent drafter cost $c_{\mathrm{draft}}(B)$ and the local intercept into $\ell_0(B)>0$, the quasiconcavity argument of Theorem~\ref{thm:width} applies to $f(M)=(G(M)+B)/\ell(M,B)$. Replacing $1+g(N)$ by $G(M)+B$ and $\ell(N)$ by $\ell(M,B)$ in the chain of equivalences of that proof gives
\[
f(M{+}1)\ge f(M)\iff \rho_{(M+1)}\ge f(M)\,c_1(B),
\]
which is the stated condition. 
\end{proof}

\section{Implementation details}
\label{app:impl}

This appendix details our single-device decoder.

\paragraph{The round.}
At round entry the emitted stream is $x\circ b$ while both caches still cover only $x$. One unchanged DFlash call on $[b,\mathrm{mask},\ldots,\mathrm{mask}]$ returns the marginals $q_{1:L}$. The tree builder selects $N$ nonroot nodes $v_1,\ldots,v_N$, the target verifies the packed input $[b,z_1,\ldots,z_N]$, where $z_i$ is the last token of $v_i$, under an ancestor-only mask, and the walk starts at $b$ and descends to the child carrying the target-selected token while one exists. When no child matches, the target-selected token $c$ is emitted as the next pending root. After accepting the path $y_{1:A}$ the target KV cache is compacted to $x\circ b\circ y_{1:A}$, the DFlash feature cache is appended for the same committed positions, and $c$ stays outside both caches until the next round, which restores the entry state.

\paragraph{Candidate construction.}
The decoder retains the top $K{=}8$ tokens at each future position and runs best-first expansion in decreasing plug-in score $\hat\pi(v\mid x,b)=\prod_{j\le d} q_j(y_j\mid x,b)$ of each node $v=y_{1:d}$ over that retained universe. A min-heap keyed by negative cumulative log-probability is seeded with the root's top-$K$ children, and each pop emits one node (token, depth, parent, rank, path score) and pushes its top-$K$ children at the next depth while depth stays below $L$. Scores use the original $q_j$, not a renormalized top-$K$ distribution, and the children of a depth-$d$ prefix come from the shared marginal $q_{d+1}$. The parent changes only the cumulative product. Round preprocessing is one top-$K$ over the $L\times|\Sigma|$ logits moved once to host memory, and the heap stage performs at most $K{+}NK$ pushes and $N$ pops, all inside the timer. 

\paragraph{Packed verification.}
Let $P=|x|$ be the number of cached prefix tokens. The packed input $[b,z_1,\dots,z_N]$ has position id $P$ for $b$ and $P{+}\mathrm{depth}(v_i)$ for $z_i$. RoPE positions repeat across siblings and depth 1 aligns with $q_1$. An additive mask of shape $(1,1,1{+}N,\,P{+}1{+}N)$ grants every packed token visibility of the $P$ prefix entries and of its ancestor chain inside the pack, including its own row. The ancestor matrix is built by one vectorized pass over parent pointers on the host and copied to the device once each round, because both the tree topology and the prefix length change from round to round. 

\paragraph{KV gather and feature cache.}
After the walk commits a path of length $A$, the target cache holds $P{+}1{+}N$ packed entries. The prefix plus the $A{+}1$ packed indices for the previous root and the accepted path are gathered in order for every layer. RoPE phases are already baked into stored keys at their depth-positions, which equal the tokens' final absolute positions. The DFlash feature cache is appended with the target features of exactly the committed positions, mirroring standard DFlash. 

The timer window shared by AR, DFlash, and \method is defined in Appendix~\ref{app:more}.

\paragraph{Software.}
AR, DFlash, and \method run in bf16 and share one software environment within each setting.

\paragraph{Stochastic verifier.}
A simpler direct sampled walk, which replaces $\arg\max$ by a single sample from the verifier distribution at each visited node, is also distribution-preserving but accepts one candidate for each node. Its bf16 total variation from autoregressive sampling is $0.004$, the same as that of the proposal-aware verifier.
\section{Measurement protocol and cross-hardware results}
\label{app:more}

\paragraph{Prompt sets.}
GSM8K uses the first 40 prompts of the public test split and HumanEval the first 40 problems. MT-Bench uses first turns only from the first 40 prompts of the public prompt set, which cover four of the eight categories (writing, roleplay, reasoning, and math). The H100 SXM panels use 40 prompts by domain and add MATH-500 and MBPP.

\paragraph{Timing window.}
Batch-one decoding rows use matched decode-only steady-state timing for AR, standard DFlash, and \method. For AR the timer opens after prefill, just before the first decoded token. For DFlash and \method it opens after prefill and after the first draft construction, just before the first target verification, and it closes after output truncation, with CUDA synchronization before both reads. Tokenizer work and prefill are excluded for all systems, and the first draft construction is excluded for the two speculative decoders. Every later DFlash call, target verification, tree and mask construction, packing, KV gather, and feature-cache update is included. The denominator is the emitted new-token count after the end-of-sequence token or truncation. Prompt order is fixed within a run. In Tables~\ref{tab:main}, \ref{tab:budget-sweep-gains}, and \ref{tab:same-packed}, each Blackwell value comes from the run with the median gain among three runs. The exceptions are Qwen3-8B at $N{=}127$, measured once, and Qwen3-4B at $N{=}31$, whose gain averages two runs. A6000 rows are single runs.

\paragraph{Measurement isolation.}
All headline rows are measured on otherwise idle GPUs. Co-tenant GPU load inflates the relative gain, because the memory-bound one-chain baseline degrades faster under shared-memory pressure than the compute-heavier wider-tree verify. A Blackwell-8B GSM8K run on a shared GPU measures about $+21\%$, against $+16\%$ on an idle GPU. Idle-GPU timing is therefore the conservative operating point.

\paragraph{Verify-cost probe.}
Table~\ref{tab:cost} is the component probe behind the cost law of \S\ref{sec:cost}. A transient puts the A100 and H100 NVL $n{=}32$ cells high, recovering by $n{=}48$. Blackwell-8B then jumps to $30.3$\,ms at $n{=}128$, a kernel or tile boundary that the 4B target does not hit in range. On the H100 SXM node the cost rises by $+6.7\%$ (8B) and $+7.4\%$ (4B) from $n{=}16$ to $n{=}128$ with no jump, a slope of $0.018$ and $0.017$\,ms for each added packed token, which sends the rule to the grid's end.

The 4B target is far cheaper than 8B ($c^{\mathrm{fw}}_1\approx0.015$ against $0.056$\,ms for each added packed token) with no cliff, so the rule predicts a wider tree for the smaller target on identical hardware, while the A6000 and the H100 SXM node ($0.018$\,ms for 8B) bracket the flat extreme, where forward cost alone never caps width in range.

\begin{table}[h]
\centering
\small
\caption{Target-forward latency (ms) by packed-token count $n{=}N{+}1$, using bf16 and a 1k-token KV prefix. Each probe uses random packed tokens, eight warmups, and 30 timed iterations. The H100 SXM rows use scaled dot-product attention.}
\label{tab:cost}
\begin{tabular}{lrrrrrrrr}
\toprule
\rowcolor{grouprow}
GPU, target & $n{=}1$ & 16 & 24 & 32 & 48 & 64 & 96 & 128\\
\midrule
Blackwell, Qwen3-8B & 17.2 & 21.9 & 23.3 & 23.6 & 24.1 & 24.6 & 24.4 & 30.3\\
A6000, Qwen3-8B     & 117.0 & 128.1 & 128.8 & 131.5 & 129.6 & 129.6 & 133.2 & 129.1\\
A100, Qwen3-8B      & 30.2 & 34.4 & 33.2 & 39.4 & 34.9 & 35.3 & 35.9 & 35.6\\
H100 NVL, Qwen3-8B  & 14.7 & 17.9 & 17.8 & 20.8 & 18.2 & 18.4 & 19.2 & 19.2\\
Blackwell, Qwen3-4B & 16.9 & 18.7 & 19.6 & 19.5 & 19.4 & 19.4 & 20.9 & 20.4\\
\midrule
H100 SXM, Qwen3-8B  & 24.7 & 30.6 & 30.2 & 30.6 & 31.2 & 31.5 & 32.2 & 32.6\\
H100 SXM, Qwen3-4B  & 23.9 & 25.3 & 25.1 & 25.3 & 25.7 & 25.9 & 26.6 & 27.1\\
\bottomrule
\end{tabular}
\end{table}

\paragraph{Prefix length.}
Table~\ref{tab:cost-prefix} gives the H100 SXM probe at 1k, 4k, 8k, and 16k tokens of KV prefix, measured in one session separate from Table~\ref{tab:cost}, together with the LLaMA-3.1-8B cell of Appendix~\ref{app:llama}. At the 1k prefix both sessions show a nearly flat cost, with a Qwen3-8B slope of $-0.016$\,ms for each added packed token in Table~\ref{tab:cost-prefix} and $0.018$ from Table~\ref{tab:cost}. The slope is at most $0.029$\,ms at every prefix length, and the rule stays at $N^\ast{=}127$ throughout.

\begin{table}[h]
\centering
\small
\caption{H100 SXM target-forward latency (ms) across KV-prefix lengths at batch one in bf16. Slope is the latency increase in ms for each added packed token from $n{=}16$ to $128$.}
\label{tab:cost-prefix}
\begin{tabular}{lrrrrrr}
\toprule
\rowcolor{grouprow}
Prefix & $n{=}1$ & 16 & 32 & 64 & 128 & Slope\\
\midrule
\grouphead{7}{Qwen3-8B}\\
1k & 23.6 & 26.6 & 25.1 & 25.2 & 24.8 & -0.016\\
4k & 22.7 & 25.5 & 25.4 & 26.1 & 26.6 & 0.010\\
8k & 22.9 & 35.9 & 36.2 & 37.5 & 37.9 & 0.018\\
16k & 23.7 & 56.7 & 58.0 & 59.9 & 60.0 & 0.029\\
\grouphead{7}{Qwen3-4B}\\
1k & 23.2 & 24.7 & 25.3 & 24.8 & 24.5 & -0.002\\
4k & 23.3 & 24.6 & 25.1 & 25.1 & 24.8 & 0.001\\
8k & 22.5 & 32.2 & 33.1 & 34.6 & 34.9 & 0.024\\
16k & 22.4 & 54.4 & 55.6 & 57.4 & 57.5 & 0.027\\
\grouphead{7}{LLaMA-3.1-8B}\\
1k & 16.7 & 17.8 & 18.0 & 18.4 & 17.9 & 0.001\\
\bottomrule
\end{tabular}

\end{table}

\paragraph{Cross-hardware cells.}
Table~\ref{tab:main} gives the rule-predicted cells, and Table~\ref{tab:budget-sweep-gains} gives the full gain sweep. The A6000 $N{=}127$ cells ($\ddagger$) are measured on a second A6000 host. On that host, the mean gain at $N{=}127$ is $+25.1\%$ for 8B and $+31.7\%$ for 4B. Both lie below the same host's $N{=}95$ means ($+26.6\%$ and $+32.6\%$), so the flat A6000 curve does not reward the last grid step and the predicted $N^\ast{=}95$ remains the A6000 oracle in Table~\ref{tab:rule}.

\begin{table}[h]
\centering
\footnotesize
\setlength{\tabcolsep}{2pt}
\caption{Decoding on Blackwell and A6000 for Qwen3-8B (left) and Qwen3-4B (right). Cells report decode-only ms/token, gain over DFlash at $N^\ast$, and committed round length $R$.}
\label{tab:main}
\begin{minipage}[t]{0.495\linewidth}\centering
\begin{tabular}[t]{l rrr r r}
\toprule
 & \multicolumn{3}{c}{Decode time (ms/token)} & & \\
\cmidrule(lr){2-4}
Domain & AR & DFlash & \method & Gain & $R$\\
\midrule
\grouphead{6}{Blackwell 8B, $N^\ast{=}63$}\\
GSM8K & 18.0 & 4.25 & \textbf{3.65} & \textbf{\textcolor{posdelta}{+16.4\%}} & 7.97\\
MT-Bench & 18.0 & 8.23 & \textbf{6.56} & \textbf{\textcolor{posdelta}{+25.4\%}} & 4.40\\
HumanEval & 18.0 & 4.06 & \textbf{3.45} & \textbf{\textcolor{posdelta}{+17.6\%}} & 8.51\\
\addlinespace[1pt]
\grouphead{6}{A6000 8B, $N^\ast{=}95$}\\
GSM8K & 119.8 & 26.95 & \textbf{21.74} & \textbf{\textcolor{posdelta}{+24.0\%}} & 8.17\\
MT-Bench & 122.2 & 53.38 & \textbf{39.82} & \textbf{\textcolor{posdelta}{+34.1\%}} & 4.55\\
HumanEval & 123.1 & 26.46 & \textbf{20.67} & \textbf{\textcolor{posdelta}{+28.0\%}} & 8.67\\
\bottomrule
\end{tabular}
\end{minipage}\hfill
\begin{minipage}[t]{0.495\linewidth}\centering
\begin{tabular}[t]{l rrr r r}
\toprule
 & \multicolumn{3}{c}{Decode time (ms/token)} & & \\
\cmidrule(lr){2-4}
Domain & AR & DFlash & \method & Gain & $R$\\
\midrule
\grouphead{6}{Blackwell 4B, $N^\ast{=}95$}\\
GSM8K & 19.3 & 4.59 & \textbf{3.70} & \textbf{\textcolor{posdelta}{+24.3\%}} & 8.13\\
MT-Bench & 17.9 & 8.17 & \textbf{6.10} & \textbf{\textcolor{posdelta}{+33.9\%}} & 4.66\\
HumanEval & 16.7 & 3.85 & \textbf{3.11} & \textbf{\textcolor{posdelta}{+23.6\%}} & 8.48\\
\addlinespace[1pt]
\grouphead{6}{A6000 4B, $N^\ast{=}95$}\\
GSM8K & 120.1 & 27.46 & \textbf{21.39} & \textbf{\textcolor{posdelta}{+28.4\%}} & 8.12\\
MT-Bench & 122.9 & 54.52 & \textbf{39.18} & \textbf{\textcolor{posdelta}{+39.1\%}} & 4.70\\
HumanEval & 134.8 & 27.21 & \textbf{20.86} & \textbf{\textcolor{posdelta}{+30.4\%}} & 8.58\\
\bottomrule
\end{tabular}
\end{minipage}
\end{table}

\begin{table}[h]
\centering
\scriptsize
\setlength{\tabcolsep}{4pt}
\caption{Greedy width-sweep gains over DFlash across GPUs and targets. Bold marks each row's oracle. Underline marks the predicted width. $\ddagger$ marks cells measured on a second A6000 host.}
\label{tab:budget-sweep-gains}
\begin{tabular}{lllrrrrrr}
\toprule
\rowcolor{grouprow}
GPU & target & domain & $N{=}15$ & $N{=}31$ & $N{=}47$ & $N{=}63$ & $N{=}95$ & $N{=}127$\\
\midrule
Blackwell & Qwen3-8B & GSM8K     & \textcolor{posdelta}{+5.9\%} & \textcolor{posdelta}{+13.1\%} & \textcolor{posdelta}{+15.5\%} & \underline{\textcolor{posdelta}{+16.4\%}} & \textbf{\textcolor{posdelta}{+17.8\%}} & $\textcolor{negdelta}{-0.8\%}$\\
Blackwell & Qwen3-8B & MT-Bench  & \textcolor{posdelta}{+19.1\%} & \textcolor{posdelta}{+21.8\%} & \textcolor{posdelta}{+24.8\%} & \underline{\textcolor{posdelta}{+25.4\%}} & \textbf{\textcolor{posdelta}{+26.9\%}} & \textcolor{posdelta}{+7.5\%}\\
Blackwell & Qwen3-8B & HumanEval & \textcolor{posdelta}{+8.2\%} & \textcolor{posdelta}{+12.5\%} & \textcolor{posdelta}{+17.3\%} & \underline{\textcolor{posdelta}{+17.6\%}} & \textbf{\textcolor{posdelta}{+19.7\%}} & \textcolor{posdelta}{+0.5\%}\\
\midrule
Blackwell & Qwen3-4B & GSM8K     & \textcolor{posdelta}{+12.2\%} & \textcolor{posdelta}{+17.9\%} & \textcolor{posdelta}{+22.8\%} & \textcolor{posdelta}{+22.7\%} & \underline{\textcolor{posdelta}{+24.3\%}} & \textbf{\textcolor{posdelta}{+27.1\%}}\\
Blackwell & Qwen3-4B & MT-Bench  & \textcolor{posdelta}{+29.2\%} & \textcolor{posdelta}{+31.6\%} & \textcolor{posdelta}{+34.2\%} & \textcolor{posdelta}{+34.5\%} & \underline{\textcolor{posdelta}{+33.9\%}} & \textbf{\textcolor{posdelta}{+37.9\%}}\\
Blackwell & Qwen3-4B & HumanEval & \textcolor{posdelta}{+12.3\%} & \textcolor{posdelta}{+14.8\%} & \textcolor{posdelta}{+21.8\%} & \textcolor{posdelta}{+19.9\%} & \underline{\textbf{\textcolor{posdelta}{+23.6\%}}} & \textcolor{posdelta}{+22.0\%}\\
\midrule
A6000 & Qwen3-8B & GSM8K     & \textcolor{posdelta}{+12.4\%} & \textcolor{posdelta}{+19.1\%} & \textcolor{posdelta}{+23.3\%} & \textbf{\textcolor{posdelta}{+24.1\%}} & \underline{\textcolor{posdelta}{+24.0\%}} & \textcolor{posdelta}{+22.0\%}$^\ddagger$\\
A6000 & Qwen3-8B & MT-Bench  & \textcolor{posdelta}{+22.8\%} & \textcolor{posdelta}{+31.2\%} & \textcolor{posdelta}{+33.1\%} & \textbf{\textcolor{posdelta}{+34.7\%}} & \underline{\textcolor{posdelta}{+34.1\%}} & \textcolor{posdelta}{+29.4\%}$^\ddagger$\\
A6000 & Qwen3-8B & HumanEval & \textcolor{posdelta}{+16.5\%} & \textcolor{posdelta}{+23.9\%} & \textcolor{posdelta}{+26.3\%} & \textcolor{posdelta}{+26.5\%} & \underline{\textbf{\textcolor{posdelta}{+28.0\%}}} & \textcolor{posdelta}{+23.8\%}$^\ddagger$\\
\midrule
A6000 & Qwen3-4B & GSM8K     & \textcolor{posdelta}{+12.7\%} & \textcolor{posdelta}{+22.8\%} & \textcolor{posdelta}{+25.5\%} & \textbf{\textcolor{posdelta}{+28.5\%}} & \underline{\textcolor{posdelta}{+28.4\%}} & \textcolor{posdelta}{+27.2\%}$^\ddagger$\\
A6000 & Qwen3-4B & MT-Bench  & \textcolor{posdelta}{+17.6\%} & \textcolor{posdelta}{+32.7\%} & \textcolor{posdelta}{+33.9\%} & \textbf{\textcolor{posdelta}{+39.2\%}} & \underline{\textcolor{posdelta}{+39.1\%}} & \textcolor{posdelta}{+38.7\%}$^\ddagger$\\
A6000 & Qwen3-4B & HumanEval & \textcolor{posdelta}{+15.6\%} & \textcolor{posdelta}{+6.3\%} & \textcolor{posdelta}{+28.5\%} & \textcolor{posdelta}{+28.0\%} & \underline{\textbf{\textcolor{posdelta}{+30.4\%}}} & \textcolor{posdelta}{+29.1\%}$^\ddagger$\\
\bottomrule
\end{tabular}
\end{table}

\paragraph{Same packed length.}
Table~\ref{tab:same-packed} isolates tree shape from width. The $N{=}15$ tree sends exactly 16 packed tokens, the same count standard DFlash verifies, and the only change is shape. Across all twelve cells this lifts $R$ by $+0.69$ to $+1.17$ and runs $5.9$ to $29.2\%$ faster before any wider tree is allowed. These fixed-budget cells are measured directly, without sweep selection. Width beyond that, up to the rule-predicted $N^\ast$ of Table~\ref{tab:main}, adds a mean of $9.0$ percentage points of gain, ranging from $4.7$ to $12.1$ points across the six Blackwell cells.

\begin{table}[h]
\centering
\small
\setlength{\tabcolsep}{4pt}
\caption{Tree-shape ablation at 16 packed tokens, comparing standard DFlash with \method at $N{=}15$. Cells report committed round length $R$, decode-only ms/token, and gain over DFlash on 40 prompts in each domain.}
\label{tab:same-packed}
\label{tab:width-shape}
\begin{tabular}{lllrrrrr}
\toprule
\rowcolor{grouprow}
GPU & target & domain & DFlash $R$ & $N{=}15$ $R$ & DFlash ms & $N{=}15$ ms & shape gain\\
\midrule
Blackwell & Qwen3-8B & GSM8K     & 6.02 & 6.71 & 4.40 & 4.16 & \textbf{\textcolor{posdelta}{+5.9\%}}\\
Blackwell & Qwen3-8B & MT-Bench  & 3.13 & 3.90 & 8.40 & 7.05 & \textbf{\textcolor{posdelta}{+19.1\%}}\\
Blackwell & Qwen3-8B & HumanEval & 6.36 & 7.26 & 4.11 & 3.80 & \textbf{\textcolor{posdelta}{+8.2\%}}\\
\midrule
Blackwell & Qwen3-4B & GSM8K     & 5.93 & 6.85 & 4.25 & 3.79 & \textbf{\textcolor{posdelta}{+12.2\%}}\\
Blackwell & Qwen3-4B & MT-Bench  & 3.11 & 4.00 & 8.77 & 6.79 & \textbf{\textcolor{posdelta}{+29.2\%}}\\
Blackwell & Qwen3-4B & HumanEval & 6.15 & 7.19 & 4.10 & 3.65 & \textbf{\textcolor{posdelta}{+12.3\%}}\\
\midrule
A6000 & Qwen3-8B & GSM8K     & 6.04 & 6.89 & 26.95 & 23.98 & \textbf{\textcolor{posdelta}{+12.4\%}}\\
A6000 & Qwen3-8B & MT-Bench  & 3.12 & 3.91 & 53.38 & 43.45 & \textbf{\textcolor{posdelta}{+22.8\%}}\\
A6000 & Qwen3-8B & HumanEval & 6.28 & 7.45 & 26.46 & 22.72 & \textbf{\textcolor{posdelta}{+16.5\%}}\\
\midrule
A6000 & Qwen3-4B & GSM8K     & 5.94 & 6.74 & 27.46 & 24.36 & \textbf{\textcolor{posdelta}{+12.7\%}}\\
A6000 & Qwen3-4B & MT-Bench  & 3.13 & 3.93 & 54.52 & 46.38 & \textbf{\textcolor{posdelta}{+17.6\%}}\\
A6000 & Qwen3-4B & HumanEval & 6.22 & 7.22 & 27.21 & 23.54 & \textbf{\textcolor{posdelta}{+15.6\%}}\\
\bottomrule
\end{tabular}
\end{table}

\paragraph{Sampled decoding on Blackwell.}
Table~\ref{tab:t1-sampled} gives the three-seed $T{=}1$ rows of the proposal-aware stochastic verifier on Blackwell-8B at $N^\ast{=}63$. The gain over the chain is $+16.7\%$ to $+30.1\%$ with seed ranges of a few points.

\begin{table}[h]
\centering
\small
\setlength{\tabcolsep}{3pt}
\caption{Stochastic verification for Qwen3-8B on Blackwell at $T{=}1$ and $N^\ast{=}63$. Cells report committed round length $R$, decode-only ms/token, mean gain across seeds 0--2, and the seed-level gain range.}
\label{tab:t1-sampled}
\begin{tabular}{lrrrrrr}
\toprule
\rowcolor{grouprow}
domain & DFlash $R$ & \method $R$ & DFlash ms/tok & \method ms/tok & mean gain & seed range\\
\midrule
GSM8K     & 5.70 & 7.54 & 4.88 & 4.18 & \textbf{\textcolor{posdelta}{+16.7\%}} & $+16.0$ to $\textcolor{posdelta}{+18.0\%}$\\
MT-Bench  & 3.69 & 5.10 & 9.43 & 7.25 & \textbf{\textcolor{posdelta}{+30.1\%}} & $+28.7$ to $\textcolor{posdelta}{+32.3\%}$\\
HumanEval & 5.66 & 7.72 & 5.11 & 4.18 & \textbf{\textcolor{posdelta}{+22.3\%}} & $+20.3$ to $\textcolor{posdelta}{+25.3\%}$\\
\bottomrule
\end{tabular}
\end{table}

\subsection{Hybrid linear-attention targets}
\label{app:hybrid_cost}

We repeat the component probe on two public hybrid checkpoints, Qwen3.6-27B (dense, three linear-attention layers for every full-attention layer) and Qwen3.6-35B-A3B (the same interleaving with 256-expert sparse mixture-of-experts (MoE) blocks, 8 active), on one Blackwell card with fused gated-delta-rule kernels. Linear-attention layers mutate their recurrent state in place on every forward. Each timed forward therefore receives an untimed copy of the prefix cache, and we report the median of 15 timed forwards after 5 warmups.

\begin{table}[t]
\centering
\small
\caption{Verify-forward latency (ms) across packed-token counts for Qwen3.6 hybrid targets on Blackwell, bf16, batch one. Values are medians of 15 timed forwards. Each receives an untimed copy of the prefix cache.}
\label{tab:cost-hybrid}
\begin{tabular}{llrrrrrrr}
\toprule
\rowcolor{grouprow}
Target & Prefix & $n{=}1$ & 2 & 16 & 64 & 128 & 256 & 1024 \\
\midrule
27B dense hybrid & 1k & 46.9 & 67.9 & 69.1 & 68.5 & 68.6 & 74.6 & 194.7 \\
27B dense hybrid & 8k & 47.3 & 68.5 & 70.0 & 78.2 & 80.2 & 93.9 & 237.2 \\
35B MoE hybrid & 1k & 87.3 & 109.7 & 129.8 & 158.4 & 175.4 & 194.7 & 229.3 \\
35B MoE hybrid & 8k & 88.2 & 111.1 & 133.9 & 163.7 & 179.8 & 198.4 & 246.2 \\
\bottomrule
\end{tabular}
\end{table}

\paragraph{Two regimes and their consequence.}
Table~\ref{tab:cost-hybrid} shows a two-regime curve on both targets. A single-token pass rides the fused recurrent decode kernel and is context-flat. Any multi-token pass switches to the chunked kernel and pays a fixed entry of about $+21$\,ms ($45\%$) on the 27B target and $+22$ to $23$\,ms ($26\%$) on the MoE target, after which the curve is locally affine again. On the dense hybrid at the 1k prefix the latency stays within $5\%$ across $n\in[2,128]$, with a slope of about zero over $n\in[16,128]$, against about $0.1$\,ms for each added token at 8k. At the 1k prefix a 63-node tree therefore verifies at the cost of the 16-token chain to within noise on the dense hybrid, while the MoE hybrid keeps a prefix-independent slope of $0.39$ to $0.40$\,ms for each added token and growth resumes by $n\approx256$ on the dense hybrid. Concurrent engine work already provides tree verification for gated linear-attention layers \citep{arxiv-2607-06763,arxiv-2607-16673}.

\subsection{Statistics and output equivalence}
\label{app:stats}

\paragraph{Bootstrap confidence intervals.}
To bound prompt-selection variance, the Blackwell cells are measured again at the rule-predicted widths on expanded prompt sets of 200 GSM8K, 80 MT-Bench, and 164 HumanEval prompts. Each prompt is timed three times, and the prompt means are resampled with a paired prompt-level bootstrap ($10{,}000$ resamples). Table~\ref{tab:bootstrap} reports the mean prompt-level gain over standard DFlash with its 95\% confidence interval. Every interval lies well above zero.

\paragraph{Run-to-run spread.}
Across the three runs behind each Blackwell-8B cell at $N^\ast{=}63$, the gain varies by at most $2.3$ points in every domain.

\paragraph{Greedy equivalence.}
In fp32, \method and standard DFlash both reproduce AR decoding token for token in every evaluated domain and budget on Qwen3-8B and Qwen3-4B. In bf16, over 600 prompts across budgets, both decoders depart from AR only at near-ties of the top two target logits, where shape-dependent bf16 rounding can flip the greedy token.

\paragraph{Sampling distribution.}
For the $T{=}1$ rows, the tokens committed by the proposal-aware stochastic verifier are consistent with the target sampling law. In fp64 their total variation from autoregressive sampling is $0.003$, below the $0.013$ Monte-Carlo floor. In bf16 it is $0.004$, matching the direct sampled walk of Appendix~\ref{app:impl}. Appendix~\ref{app:serving} reports the same comparison under serving load.
\section{H100 SXM panels in full}
\label{app:h100sxm}

\paragraph{Evaluation protocol.}
The H100 SXM panels use 40 prompts from each of GSM8K \citep{cobbe2021gsm8k}, MT-Bench \citep{zheng2023mtbench}, HumanEval \citep{chen2021humaneval}, MATH-500 \citep{lightman2023math500}, and MBPP \citep{austin2021mbpp}, with 256 new tokens (512 for MATH-500 and MBPP) and the decode-only timer of Appendix~\ref{app:more}.

\paragraph{Width sweep.}
Table~\ref{tab:h100sxm} gives the full ms/token sweep behind Table~\ref{tab:h100main}. Two further repetitions of the whole panel reproduce every chain-to-\method gain within $3.7$ points, and no cell shows the cliff reversal of the Blackwell-8B sweep.

\begin{table}[h]
\centering
\small
\setlength{\tabcolsep}{4.4pt}
\caption{Qwen3 greedy width sweeps on H100 SXM, using the public block-16 DFlash head of each target. Cells report mean decode-only ms/token. $R$ denotes the committed round length of DFlash and of the widest \method tree.}
\label{tab:h100sxm}
\begin{tabular}{lrrrrrrrrr}
\toprule
\rowcolor{grouprow}
Domain & max new & AR & chain & $N{=}47$ & $N{=}63$ & $N{=}95$ & $N{=}127$ & $R$ chain & $R$ at $127$\\
\midrule
\grouphead{10}{Qwen3-8B}\\
GSM8K     & 256 & 24.85 & 5.48 & 4.50 & 4.40 & 4.35 & \textbf{4.28} & 6.23 & 8.46\\
MT-Bench  & 256 & 24.79 & 10.75 & 8.04 & 7.86 & \textbf{7.83} & 7.85 & 4.06 & 5.52\\
HumanEval & 256 & 24.77 & 5.31 & 4.26 & 4.12 & 4.06 & \textbf{4.02} & 6.52 & 9.04\\
MATH-500  & 512 & 24.84 & 4.46 & 3.82 & 3.85 & 3.70 & \textbf{3.69} & 7.81 & 9.87\\
MBPP      & 512 & 25.39 & 5.27 & 4.22 & 4.14 & 4.08 & \textbf{4.06} & 6.56 & 9.09\\
\grouphead{10}{Qwen3-4B}\\
GSM8K     & 256 & 23.87 & 5.42 & 4.33 & 4.26 & 4.15 & \textbf{4.13} & 6.13 & 8.47\\
MT-Bench  & 256 & 24.72 & 10.75 & 7.87 & 7.77 & 7.62 & \textbf{7.50} & 3.95 & 5.78\\
HumanEval & 256 & 23.88 & 5.18 & 4.16 & 4.11 & 4.01 & \textbf{3.95} & 6.49 & 9.02\\
MATH-500  & 512 & 25.21 & 4.72 & 4.02 & 3.93 & 3.91 & \textbf{3.81} & 7.49 & 9.76\\
MBPP      & 512 & 24.76 & 4.99 & 4.04 & 4.03 & 3.94 & \textbf{3.90} & 6.72 & 9.11\\
\bottomrule
\end{tabular}
\end{table}

At $T{=}1$, the tree leads the chain by $+22\%$ to $+37\%$ across the ten cells of Table~\ref{tab:h100main}, and each seed's mean tree decode time is within $3\%$ of the three-seed cell mean.

\paragraph{EAGLE-3 settings.}
The EAGLE-3 rows of Table~\ref{tab:h100main} run the official EAGLE code on the same node, prompts, and decode-only timer as the other rows, with the DFlash paper's tree settings (a draft tree of 16 tokens to match the block and of 60 as in the EAGLE-3 paper, seven draft steps, top-$k$ 10). Two public Qwen3-8B heads exist. The body rows use the stronger Tengyunw head, and Table~\ref{tab:eagle3-heads} adds the AngelSlim head the DFlash paper reports against. The 4B rows use the AngelSlim Qwen3-4B head, the only public one.

\begin{table}[h]
\centering
\small
\setlength{\tabcolsep}{4pt}
\caption{EAGLE-3 head comparison for Qwen3-8B greedy decoding on H100 SXM. Cells report decode-only ms/token and committed round length $R$ for 16- and 60-token draft trees. AR anchors are in ms/token.}
\label{tab:eagle3-heads}
\begin{tabular}{llrrrrrrrrrr}
\toprule
Head & Tree & \multicolumn{2}{c}{GSM8K} & \multicolumn{2}{c}{MATH-500} & \multicolumn{2}{c}{HumanEval} & \multicolumn{2}{c}{MBPP} & \multicolumn{2}{c}{MT-Bench}\\
\cmidrule(lr){3-4}\cmidrule(lr){5-6}\cmidrule(lr){7-8}\cmidrule(lr){9-10}\cmidrule(lr){11-12}
 & & ms/tok & $R$ & ms/tok & $R$ & ms/tok & $R$ & ms/tok & $R$ & ms/tok & $R$\\
\midrule
Tengyunw & 16 & 9.65 & 4.45 & 10.47 & 4.11 & 11.13 & 3.97 & 10.72 & 4.11 & 12.00 & 3.74\\
Tengyunw & 60 & \textbf{8.07} & \textbf{5.34} & \textbf{8.91} & \textbf{4.88} & \textbf{9.21} & \textbf{4.76} & \textbf{9.06} & \textbf{4.93} & \textbf{10.28} & \textbf{4.49}\\
AngelSlim & 16 & 12.97 & 3.26 & 13.35 & 3.11 & 13.29 & 3.18 & 13.82 & 3.10 & 16.33 & 2.64\\
AngelSlim & 60 & 11.23 & 3.78 & 11.65 & 3.59 & 11.59 & 3.65 & 11.99 & 3.58 & 14.22 & 3.05\\
\midrule
AR anchor & & \multicolumn{2}{c}{24.85} & \multicolumn{2}{c}{24.84} & \multicolumn{2}{c}{24.77} & \multicolumn{2}{c}{25.39} & \multicolumn{2}{c}{24.79}\\
\bottomrule
\end{tabular}

\end{table}

\paragraph{Generation length and long prefix.}
Table~\ref{tab:len-long} reports MATH-500 at $512$ and $2048$ new tokens and GovReport summarization prompts cut to $8$k tokens. The $8$k row of Table~\ref{tab:cost-prefix} gives $N^\ast{=}127$, and on both targets the gain stays within $4$ points from $N{=}47$ to $127$, with the best width at $95$. At $2048$ new tokens the sweep covers $N{=}63$ and $N{=}127$.

\begin{table}[h]
\centering
\small
\setlength{\tabcolsep}{4pt}
\caption{Generation length and KV prefix on H100 SXM at $T{=}0$ over 40 prompts, with \method at $N^\ast{=}127$. The long-prefix rows generate 256 new tokens. Cells report speedup over AR, committed round length $R$, gain over DFlash, and the sweep oracle.}
\label{tab:len-long}
\begin{tabular}{l r r r r r}
\toprule
\rowcolor{grouprow}
Cell & AR ms/tok & DFlash $\times$AR ($R$) & \method $\times$AR ($R$) & Gain & Oracle $N$ (gain)\\
\midrule
\grouphead{6}{Generation length, Qwen3-8B, MATH-500}\\
512 new tokens & 24.8 & 5.57 (7.81) & \textbf{6.74} (9.87) & \textcolor{posdelta}{+21\%} & 127 (\textcolor{posdelta}{+21\%})\\
2048 new tokens & 24.0 & 5.82 (8.09) & \textbf{6.99} (10.20) & \textcolor{posdelta}{+20\%} & 127 (\textcolor{posdelta}{+20\%})\\
\grouphead{6}{Long prefix, GovReport prompts cut to 8k tokens}\\
Qwen3-8B & 25.1 & 1.39 (2.24) & \textbf{1.77} (3.43) & \textcolor{posdelta}{+28\%} & 95 (\textcolor{posdelta}{+30\%})\\
Qwen3-4B & 24.6 & 1.43 (2.18) & \textbf{1.83} (3.37) & \textcolor{posdelta}{+27\%} & 95 (\textcolor{posdelta}{+30\%})\\
\bottomrule
\end{tabular}

\end{table}

\section{Serving harness}
\label{app:serving}

\begin{figure}[t]
\centering
\includegraphics[width=0.5\linewidth]{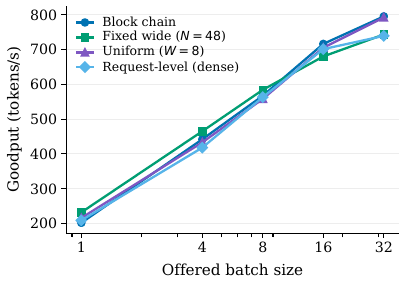}
\caption{Qwen3-8B serving goodput on Blackwell, medians of four runs.}
\label{fig:serving_goodput}
\end{figure}

The batched harness extends the single-device decoder to $B$ concurrent requests. One batched DFlash pass returns every position marginal for all requests, an allocator assigns the shared verifier budget, one batched ancestor-masked target pass verifies all packs, and a ragged left-padded target cache is compacted each round. It is a dense prototype whose verify pack is right-padded to $1+\max_r N_r$ tokens for request widths $N_r$, which charges every request the maximum allocation.

\paragraph{Greedy equivalence under batching.}
We compare every sequence committed under each allocation policy with AR greedy decoding. At $B{=}4$ over 18 prompts, every policy departs from it only at bf16 near-ties, where the top two target logits differ by at most $0.25$, and the plain DFlash baseline itself departs at $11$ such ties.

\paragraph{Block shrink against width shrink.}
Table~\ref{tab:blockshrink} separates drafter length from verifier width at matched packed budgets, with a 256-new-token cap. At 8 packed tokens, a 7-node tree drawn from the block-16 drafter is the fastest configuration in every domain, ahead of the block-8 chain and of the same-size tree from a block-8 drafter. Shrinking the tree therefore costs less than shrinking the drafter block.

\begin{table}[h]
\centering
\footnotesize
\setlength{\tabcolsep}{2.5pt}
\caption{Block-length and tree-width ablation at matched packed budgets on H100 SXM, Qwen3-8B, $T{=}0$. Cells show decode-only ms/token with committed round length in parentheses.}
\label{tab:blockshrink}
\begin{tabular}{l rrr rr}
\toprule
 & \multicolumn{3}{c}{8 packed tokens} & \multicolumn{2}{c}{16 packed tokens}\\
\cmidrule(lr){2-4}\cmidrule(lr){5-6}
Domain & chain, block 8 & tree $N{=}7$, block 8 & tree $N{=}7$, block 16 & chain, block 16 & tree $N{=}15$, block 16\\
\midrule
GSM8K & 6.84 (4.86) & 6.51 (5.19) & \textbf{6.31 (5.37)} & 5.47 (6.23) & \textbf{4.96 (7.02)}\\
MT-Bench & 11.77 (3.28) & 10.29 (3.66) & \textbf{10.07 (3.80)} & 10.82 (4.06) & \textbf{8.87 (4.68)}\\
HumanEval & 6.74 (4.96) & 6.35 (5.32) & \textbf{6.33 (5.40)} & 5.37 (6.52) & \textbf{4.76 (7.48)}\\
\bottomrule
\end{tabular}

\end{table}

\paragraph{Cost law under load.}
The drafter forward is independent of width and grows only with batch, from $3.2$\,ms at $B{=}1$ to $19.3$\,ms at $B{=}32$. In Figure~\ref{fig:serving_cost}a, the verifier-forward cost at a width of $W{=}64$ for each request over $W{=}1$ grows from $+10\%$ at $B{=}1$ to $+79\%$ at $B{=}8$ and $+237\%$ at $B{=}32$.

\paragraph{Goodput and acceptance in the dense prototype.}
Figure~\ref{fig:serving_goodput} gives goodput for four policies, the block chain, a fixed wide tree with $N{=}48$, a uniform width of $W{=}8$, and dense request-level allocation. Fixed wide leads at $B{\le}8$ and collapses at $B{\ge}16$, where the plain chain is best. Request-level allocation lifts mean accepted round length by $5.3$ to $9.9\%$ over a uniform split, but the dense pad-to-max charges that width to every request, so its goodput sits at or below uniform. The padding tax, not the allocation, is what the varlen verifier below removes.

\paragraph{Allocation oracle and the varlen verifier.}
Under ragged packing the verify cost scales with the total node count. At a fixed shared budget goodput is proportional to total accepted tokens. An offline oracle over the measured request-level acceptance (900 requests by domain, 400 random batches) shows online global best-first raising total accepted tokens over a uniform split by $+16.8\%$/$+18.2\%$/$+22.5\%$ on GSM8K/MT-Bench/HumanEval at a scarce shared budget of four nodes for each of $B{=}32$ requests, within a few points of hindsight water-filling, and by only $1$ to $2\%$ once the budget is abundant. Bucketing requests by node count does not realize this, since each extra pass re-streams the 8B weights and goodput falls by up to $45\%$, and within one dense pass the padding rows still cost $9$ to $15\%$ of the verify forward.

We therefore implement a single-pass varlen verifier as a forward-only tree-attention kernel in Triton (cumulative-sequence-length ragged layout, query-level ancestor bitmask, grouped-query attention, bf16). Its logits agree with the dense verify to fp32 tolerance, and decoding with it also departs from AR greedy only at bf16 near-ties. It lifts committed-token goodput over uniform width by a median $+8.6\%$ at $B{=}16$ and $+13.9\%$ at $B{=}32$ (four runs, ranges $[+8.1,+14.6]\%$ and $[+13.2,+19.8]\%$), where the dense request-level verify reaches $-0.5\%$ and $-6.7\%$. Over the batch-wise best static policy, which is the chain at both batch sizes, it lifts goodput by a median $+7.0\%$ at $B{=}16$ and $+13.7\%$ at $B{=}32$, with a gain in every run. Best-first allocation, which Proposition~\ref{prop:alloc} proves optimal for a shared budget, adds $9$--$10\%$ committed acceptance at $B{=}8$--$16$.

\paragraph{Continuous batching and sampling under load.}
A minimal continuous-batching loop (Poisson arrivals, dynamic admission and retirement, both policies on the same varlen store and verify so only the allocator differs) separates the allocation from the padding-tax removal. At sustained occupancy ($200$ requests) request-level best-first beats uniform width by $0.9$ to $3.6\%$ across active batches of $8$ to $32$, and the loop's outputs again depart from AR greedy only at bf16 near-ties. Under the same scheduler at $T{=}1$ each request commits through the residual recursion of Theorem~\ref{thm:stoch}, and on an H100 the served committed tokens are consistent with autoregressive sampling (fp64 total variation $0.006$, below the $0.013$ Monte-Carlo floor, its maximum over served prefixes decaying as the inverse square root of the draw count).

\section{Extended analysis}
\label{app:analysis}

The first four analyses below replay the offline Qwen3-8B greedy logs from the same 40-prompt slices as the main rows, following the budget-sweep and acceptance-length analyses of EAGLE-2 and OPT-Tree \citep{li2024eagle2,wang2025opt}.

\paragraph{Adaptive against fixed-shape trees.}
At fixed nonroot budget $N{=}47$, two fixed-shape trees do not beat the standard chain. The fixed schedule $(8,4,2,2,1,\ldots)$ reaches $R=3.7/3.0/3.7$ and uniform binary branching $4.9/3.2/4.8$, against the chain's $6.3/3.2/6.3$ on GSM8K/MT-Bench/HumanEval. The adaptive top-$N$ path-product tree reaches $7.7/4.3/7.7$.

\paragraph{Where rejections go.}
For each first rejection of standard DFlash, Figure~\ref{fig:analysis}a ranks the target greedy correction in the DFlash marginal at the rejected offset. The correction has rank 2--4 in $56$ to $77\%$ of first-rejection events (GSM8K and HumanEval $77\%$, MT-Bench $56\%$), ranks 5--8 in another $11$ to $14\%$, and lies outside the top 32 in only $2$ to $11\%$.

\paragraph{Calibration.}
A hit means that the candidate node token equals the target greedy token at that node's prefix. At depth 1, the binned reliability in Figure~\ref{fig:analysis}b is close to the diagonal. In the $[0.9,1.0]$ bin the empirical hit rate is $0.98$ to $0.99$, and mid-range bins are mildly underconfident ($[0.6,0.7]\to0.66$ to $0.77$).

\paragraph{Stopping values.}
For GSM8K on Blackwell, the threshold of the rule in \S\ref{sec:width} is about $0.015$. The sorted candidate masses fall from $0.86$ at the first candidate to $0.016$ at the 48th and $0.011$ at the 64th, so the rule predicts the optimum between $N{=}47$ and $N{=}63$, the flat region of Table~\ref{tab:budget-sweep-gains}.

\subsection{Rank-cap ablation}
\label{app:rankcap}

The public decoder of \S\ref{sec:method} retains the top $K{=}8$ tokens at each future position before the best-first expansion, while the fixed-budget problem of Theorem~\ref{thm:tree} admits the full budget-tied rank set. On the saved greedy round logs ($1914$/$3090$/$2150$ rounds on GSM8K/MT-Bench/HumanEval, Qwen3-8B target, public DFlash-b16 head) we replay both rules at matched node budget and score committed accepted length $R$ against the realized continuation. At budget $32$ the $K{=}8$ tree comes within $0.5\%$ of the full-rank tree's committed accepted length in every domain, because the realized target token sits beyond rank $8$ only $9.5$ to $15.2\%$ of the time. At the deployed width $N^\ast{=}63$ on a dedicated top-$64$ log the gap stays within $1.1\%$. The cap is near-free at the operating point while scoring $8$ candidates for each position instead of the full budget-tied rank set.

\subsection{Width transfer to DART}
\label{app:dart}

To test whether the width mechanism is specific to DFlash, we repeat the offline acceptance replay on DART \citep{arxiv-2601-19278}, an architecturally distinct one-pass drafter consisting of a single customized Transformer decoder layer that emits position-level logits for $L$ future positions in one forward. We use the public \texttt{fvliang/qwen8b-dart} head for the same Qwen3-8B target, log the top 32 tokens of its one-pass logits at each position over a greedy decode (mapped to the target vocabulary through its draft-vocabulary table), and score committed accepted length $R$ of a best-first tree with rank cap $K{=}8$ against the realized continuation. Over $1900$ replayed rounds for each benchmark, one at each position of 20 greedy decodes, $R$ grows from $1.66$ at budget $1$ to $2.81$ at budget $63$ on GSM8K ($+70\%$) and from $1.76$ to $3.15$ on HumanEval ($+79\%$). The top-1 chain over the same eight positions reaches $R=2.01$ and $2.35$, which the tree exceeds by $19.1\%$ and $14.5\%$ at the same eight nodes and by $39.9\%$ and $34.2\%$ at budget $63$. DART's $L{=}8$ caps $R$ at $9$, against $16$ for the DFlash block, but the trend is the same. We also time DART's own tree decoder, which adds its small n-gram trie and runtime tree search, against autoregressive decoding of the same target on an A6000, with 30 prompts in each domain and 256 new tokens. At node budgets of $8$ and $60$, the tree decoder is $1.91\times$ and $2.14\times$ faster than autoregressive decoding on GSM8K and $2.02\times$ and $2.34\times$ faster on HumanEval. Across budgets of $8$, $16$, $32$, and $60$, its committed round length rises at every step, from $2.30$ to $2.65$ on GSM8K and from $2.49$ to $2.92$ on HumanEval.

\section{Additional target families}
\subsection{LLaMA-3.1-8B, block 10}
\label{app:llama}

Table~\ref{tab:llama} repeats the main protocol on LLaMA-3.1-8B-Instruct with the public UltraChat-trained block-10 DFlash head and official EAGLE-3 head, using H100 SXM, bf16, batch one, 40 prompts in each of five domains, and 256 new tokens (512 for MATH-500 and MBPP). The flat cost slope of $0.001$\,ms for each added packed token selects $N^\ast{=}127$. The tree beats the chain at every width and domain, with sweeps within $2.1\%$ from $63$ to $127$. On GSM8K, EAGLE-3 at tree 60 matches the accepted length of \method, but uses eight draft passes rather than one. In decode time, \method at $N^\ast{=}127$ is $30$ to $60\%$ faster than the 60-token EAGLE-3 tree in every domain.

\begin{table}[h]
\centering
\footnotesize
\setlength{\tabcolsep}{3pt}
\caption{LLaMA-3.1-8B greedy decoding with the block-10 DFlash head on H100 SXM. Cells report speedup over AR and committed round length $R$.}
\label{tab:llama}
\begin{tabular}{lrrrrrrrrrrrr}
\toprule
Method \tabsmall\secondarycolor{(width)} & \multicolumn{2}{c}{GSM8K} & \multicolumn{2}{c}{MATH-500} & \multicolumn{2}{c}{HumanEval} & \multicolumn{2}{c}{MBPP} & \multicolumn{2}{c}{MT-Bench} & \multicolumn{2}{c}{Avg.}\\
\cmidrule(lr){2-3}\cmidrule(lr){4-5}\cmidrule(lr){6-7}\cmidrule(lr){8-9}\cmidrule(lr){10-11}\cmidrule(lr){12-13}
 & $\times$AR & $R$ & $\times$AR & $R$ & $\times$AR & $R$ & $\times$AR & $R$ & $\times$AR & $R$ & $\times$AR & $R$\\
\midrule
AR ms/token & \multicolumn{2}{c}{17.30} & \multicolumn{2}{c}{19.67} & \multicolumn{2}{c}{17.12} & \multicolumn{2}{c}{19.89} & \multicolumn{2}{c}{16.87} & \multicolumn{2}{c}{}\\
\midrule
\grouphead{13}{Autoregressive drafter}\\
EAGLE-3 \tabsmall\secondarycolor{(10)} & 2.05 & 4.60 & 1.75 & 4.05 & 2.28 & 5.15 & 2.25 & 5.11 & 1.73 & 4.14 & 2.01 & 4.61\\
EAGLE-3 \tabsmall\secondarycolor{(60)} & 2.67 & 6.08 & 2.31 & 5.28 & 2.80 & 6.56 & 2.81 & 6.40 & 2.34 & 5.59 & 2.59 & 5.98\\
\grouphead{13}{One-pass block drafter}\\
\cellcolor{baselinerow}DFlash \tabsmall\secondarycolor{(10)} & \cellcolor{baselinerow}3.00 & \cellcolor{baselinerow}4.31 & \cellcolor{baselinerow}2.76 & \cellcolor{baselinerow}4.01 & \cellcolor{baselinerow}3.60 & \cellcolor{baselinerow}5.18 & \cellcolor{baselinerow}3.56 & \cellcolor{baselinerow}5.09 & \cellcolor{baselinerow}2.32 & \cellcolor{baselinerow}3.51 & \cellcolor{baselinerow}3.05 & \cellcolor{baselinerow}4.42\\
\cellcolor{oursrow}\method \tabsmall\secondarycolor{($N{=}15$)} & \cellcolor{oursrow}3.41 & \cellcolor{oursrow}5.11 & \cellcolor{oursrow}3.23 & \cellcolor{oursrow}4.83 & \cellcolor{oursrow}4.01 & \cellcolor{oursrow}6.02 & \cellcolor{oursrow}4.00 & \cellcolor{oursrow}5.90 & \cellcolor{oursrow}2.74 & \cellcolor{oursrow}4.23 & \cellcolor{oursrow}3.48 & \cellcolor{oursrow}5.22\\
\cellcolor{oursrow}\method \tabsmall\secondarycolor{($N{=}31$)} & \cellcolor{oursrow}3.61 & \cellcolor{oursrow}5.44 & \cellcolor{oursrow}3.43 & \cellcolor{oursrow}5.08 & \cellcolor{oursrow}4.23 & \cellcolor{oursrow}6.37 & \cellcolor{oursrow}4.27 & \cellcolor{oursrow}6.30 & \cellcolor{oursrow}2.92 & \cellcolor{oursrow}4.52 & \cellcolor{oursrow}3.69 & \cellcolor{oursrow}5.54\\
\cellcolor{oursrow}\method \tabsmall\secondarycolor{($N{=}47$)} & \cellcolor{oursrow}3.75 & \cellcolor{oursrow}5.68 & \cellcolor{oursrow}3.45 & \cellcolor{oursrow}5.16 & \cellcolor{oursrow}4.32 & \cellcolor{oursrow}6.55 & \cellcolor{oursrow}4.36 & \cellcolor{oursrow}6.47 & \cellcolor{oursrow}3.02 & \cellcolor{oursrow}4.76 & \cellcolor{oursrow}3.78 & \cellcolor{oursrow}5.72\\
\cellcolor{oursrow}\method \tabsmall\secondarycolor{($N{=}63$)} & \cellcolor{oursrow}3.80 & \cellcolor{oursrow}5.85 & \cellcolor{oursrow}3.52 & \cellcolor{oursrow}5.30 & \cellcolor{oursrow}4.33 & \cellcolor{oursrow}6.65 & \cellcolor{oursrow}4.42 & \cellcolor{oursrow}6.60 & \cellcolor{oursrow}3.07 & \cellcolor{oursrow}4.86 & \cellcolor{oursrow}3.83 & \cellcolor{oursrow}5.85\\
\cellcolor{oursrow}\method \tabsmall\secondarycolor{($N{=}95$)} & \cellcolor{oursrow}3.84 & \cellcolor{oursrow}6.02 & \cellcolor{oursrow}3.55 & \cellcolor{oursrow}5.45 & \cellcolor{oursrow}4.38 & \cellcolor{oursrow}6.83 & \cellcolor{oursrow}4.48 & \cellcolor{oursrow}6.80 & \cellcolor{oursrow}3.05 & \cellcolor{oursrow}4.94 & \cellcolor{oursrow}3.86 & \cellcolor{oursrow}6.01\\
\ours\method \tabsmall\secondarycolor{($N^\ast{=}127$)} & 3.82 & 6.08 & 3.59 & 5.60 & 4.38 & 6.94 & 4.49 & 6.90 & 3.05 & 5.00 & 3.87 & 6.11\\
\bottomrule
\end{tabular}

\end{table}

\subsection{Qwen3-Coder-30B-A3B, block 16}
\label{app:coder}

This appendix repeats the main protocol on Qwen3-Coder-30B-A3B-Instruct, a 30B sparse mixture-of-experts target with 3B active parameters, using the public block-16 DFlash head for that model on the same H100 SXM node, with greedy decoding, 40 prompts each on GSM8K, HumanEval, and MBPP, and 256 new tokens. AR, DFlash, and \method all run the experts without a fused kernel.

At a 1k prefix, target-forward latency is $58.7$\,ms at $n{=}16$ and $57.4$\,ms at $128$, a flat slope within run noise. In Table~\ref{tab:coder}, the rule selects $N^\ast{=}127$, also the mean sweep oracle, with a $35.6\%$ mean gain over DFlash. Gains rise through $127$ on GSM8K and HumanEval, while MBPP peaks at $95$ and gives back $1.0$ point of gain at the final step. Mean gain improves only $0.7$ points over that step, consistent with a broad plateau rather than a cost cliff.

\begin{table}[h]
\centering
\footnotesize
\setlength{\tabcolsep}{3pt}
\caption{Qwen3-Coder-30B-A3B greedy decoding with the block-16 DFlash head on H100 SXM. Cells report speedup over AR and committed round length $R$.}
\label{tab:coder}
\begin{tabular}{lrrrrrrrr}
\toprule
Method \tabsmall\secondarycolor{(width)} & \multicolumn{2}{c}{GSM8K} & \multicolumn{2}{c}{HumanEval} & \multicolumn{2}{c}{MBPP} & \multicolumn{2}{c}{Avg.}\\
\cmidrule(lr){2-3}\cmidrule(lr){4-5}\cmidrule(lr){6-7}\cmidrule(lr){8-9}
 & $\times$AR & $R$ & $\times$AR & $R$ & $\times$AR & $R$ & $\times$AR & $R$\\
\midrule
AR ms/token & \multicolumn{2}{c}{62.3} & \multicolumn{2}{c}{63.8} & \multicolumn{2}{c}{62.6} & \multicolumn{2}{c}{}\\
\midrule
\cellcolor{baselinerow}DFlash \tabsmall\secondarycolor{(16)} & \cellcolor{baselinerow}3.84 & \cellcolor{baselinerow}4.81 & \cellcolor{baselinerow}5.86 & \cellcolor{baselinerow}7.77 & \cellcolor{baselinerow}5.08 & \cellcolor{baselinerow}6.64 & \cellcolor{baselinerow}4.93 & \cellcolor{baselinerow}6.40\\
\cellcolor{oursrow}\method \tabsmall\secondarycolor{($N{=}15$)} & \cellcolor{oursrow}4.52 & \cellcolor{oursrow}5.65 & \cellcolor{oursrow}6.53 & \cellcolor{oursrow}8.61 & \cellcolor{oursrow}5.87 & \cellcolor{oursrow}7.52 & \cellcolor{oursrow}5.64 & \cellcolor{oursrow}7.26\\
\cellcolor{oursrow}\method \tabsmall\secondarycolor{($N{=}31$)} & \cellcolor{oursrow}4.78 & \cellcolor{oursrow}6.05 & \cellcolor{oursrow}7.13 & \cellcolor{oursrow}9.40 & \cellcolor{oursrow}6.39 & \cellcolor{oursrow}8.19 & \cellcolor{oursrow}6.10 & \cellcolor{oursrow}7.88\\
\cellcolor{oursrow}\method \tabsmall\secondarycolor{($N{=}47$)} & \cellcolor{oursrow}5.05 & \cellcolor{oursrow}6.37 & \cellcolor{oursrow}7.29 & \cellcolor{oursrow}9.58 & \cellcolor{oursrow}6.50 & \cellcolor{oursrow}8.40 & \cellcolor{oursrow}6.28 & \cellcolor{oursrow}8.12\\
\cellcolor{oursrow}\method \tabsmall\secondarycolor{($N{=}63$)} & \cellcolor{oursrow}5.20 & \cellcolor{oursrow}6.58 & \cellcolor{oursrow}7.56 & \cellcolor{oursrow}9.99 & \cellcolor{oursrow}6.55 & \cellcolor{oursrow}8.53 & \cellcolor{oursrow}6.43 & \cellcolor{oursrow}8.37\\
\cellcolor{oursrow}\method \tabsmall\secondarycolor{($N{=}95$)} & \cellcolor{oursrow}5.26 & \cellcolor{oursrow}6.69 & \cellcolor{oursrow}7.72 & \cellcolor{oursrow}10.26 & \cellcolor{oursrow}\textbf{6.91} & \cellcolor{oursrow}\textbf{8.98} & \cellcolor{oursrow}6.63 & \cellcolor{oursrow}8.64\\
\ours\method \tabsmall\secondarycolor{($N^\ast{=}127$)} & \textbf{5.34} & \textbf{6.86} & \textbf{7.78} & \textbf{10.34} & 6.86 & 8.89 & \textbf{6.66} & \textbf{8.70}\\
\bottomrule
\end{tabular}

\end{table}

\section{Extended related work}
\label{app:related}

\paragraph{One-pass draft trees.}
DART also builds trees over one-pass parallel logits. Its drafter, described in Appendix~\ref{app:dart}, is a single Transformer layer, and it prunes candidates with an external n-gram continuity signal and a trie at runtime \citep{arxiv-2601-19278}, answering which candidates to keep under a supplied construction rule or node budget. \method asks how many nonroot nodes the loop should pack on the measured hardware.

JetSpec trains a causal-parallel draft head under a tree-causal mask, where every tree node is predicted in one forward pass yet conditions only on its ancestor path, and pairs it with best-first expansion and a paged tree-attention kernel \citep{arxiv-2606-18394}. Weaver keeps the pretrained DFlash drafter and trains a lightweight autoregressive adapter over its top-$K$ marginals to restore the conditional dependencies a factorized drafter drops, together with rollback-free tree verification for gated linear-attention layers \citep{arxiv-2607-06763}. Both target the acceptance ceiling of branch-agnostic marginals.

\paragraph{Tree verification systems.}
SpecInfer \citep{miao2024specinfer} builds token trees from small speculative models and verifies valid prefixes with topology-aware tree attention, Medusa \citep{cai2024medusa} adds self-drafting heads with Cartesian or calibration-pruned sparse trees, and Sequoia \citep{chen2024sequoia} studies hardware-aware speculative trees for serving.

\paragraph{Adaptive tree and budget objectives.}
EAGLE-2 \citep{li2024eagle2} expands the current frontier, scores nodes by pathwise draft confidence, and reranks shallow and deep nodes into a connected tree. OPT-Tree \citep{wang2025opt} maximizes a draft-probability proxy for expected acceptance and stops draft deepening when the gain falls below a threshold tied to draft-step cost. Both optimize before the target pass for AR or EAGLE-style drafters.

\paragraph{Verification rules and traversal.}
Block verification \citep{sun2024block} jointly verifies a sampled draft block, HSD \citep{zhou2026hsd} adds hierarchical branch resampling, Traversal Verification \citep{weng2025traversal} accepts leaf-to-root sequences, and SpecTr \citep{sun2023spectr} and Towards Optimal Multi-draft SD \citep{arxiv-2502-18779} formulate distribution-preserving multi-candidate acceptance through optimal transport.  For the greedy walk, acceptance certificates show that a tree with branching factor $m$ certifies acceptance at a target margin about $\sqrt{(m+1)/2}$ times smaller than single-token greedy under the same divergence budget \citep{arxiv-2606-30265}.

\paragraph{Adaptive controllers.}
At the serving level, MagicDec maps where speculation pays \citep{arxiv-2408-11049}, TETRIS allocates a batch's draft-token budget \citep{arxiv-2502-15197}, FASER manages draft and verify phases under dynamic load \citep{arxiv-2604-20503}, a utility gate bounds worst-case slowdown for MoE targets \citep{arxiv-2506-20675}, and MoESD identifies when speculation accelerates sparse MoE targets \citep{arxiv-2505-19645}.

Three concurrent controllers adapt scalar draft budgets for the same one-pass block drafter. D-cut keeps the top survival-ranked prefixes under a profiled latency table \citep{arxiv-2607-14647}, AdaFlash distills on policy and truncates draft length with a learned head \citep{arxiv-2607-19223}, and BlockPilot classifies a sample-adaptive block size \citep{arxiv-2606-31315}. All adapt the depth or granularity of one chain. 

\paragraph{Objectives and theory.}
Acceptance-aligned objectives \citep{zhou2024distillspec,goel2024tvdpp,hass2025} change the trained drafter, DistillSpec by aligning draft and target distributions and HASS by targeting the training and decoding mismatch. \citet{yin2024theoretical} analyze limits of unbiased rejection-based speculative decoding. Surveys of speculative decoding round out the context \citep{arxiv-2401-07851,arxiv-2502-19732}.

\end{document}